  \renewcommand{\footnoterule}{%
  \kern -3pt
  \hrule width \textwidth height 1pt
  \kern 2pt
} 
\newcommand{\reals}{\mathbb{R}}
\newcommand{\naturals}{\mathbb{N}}
\newcommand{\Expect}{\mathbb{E}}
\newcommand{\prob}[1]{\mathbb{P}\left[#1\right]}
\newcommand{\inner}[2]{\langle {#1}, {#2}  \rangle}
\newcommand{\calE}{\mathcal{E}}
\newcommand{\calN}{\mathcal{N}}
\newcommand{\calP}{\mathcal{P}}
\newcommand{\calQ}{\mathcal{Q}}
\newcommand{\calS}{\mathcal{S}}
\newcommand{\calT}{\mathcal{T}}
\newcommand{\ba}{a}
\newcommand{\be}{e}
\newcommand{\bh}{h}
\newcommand{\bu}{u}
\newcommand{\bv}{v}
\newcommand{\bw}{w}
\newcommand{\bx}{x}
\newcommand{\by}{y}
\newcommand{\bA}{A}
\newcommand{\bD}{D}
\newcommand{\bE}{E}
\newcommand{\bI}{I}
\newcommand{\bM}{M}
\newcommand{\bP}{P}
\newcommand{\bW}{W}
\newcommand{\norm}[1]{\left \|#1\right \|}
\newcommand{\valid}{\mathrm{valid}}
\newcommand{\sym}{\mathrm{sym}}
\newcommand{\define}{\triangleq}
\newcommand{\pth}[1]{\left( #1 \right)}
\newcommand{\qth}[1]{\left[ #1 \right]}
\newcommand{\sth}[1]{\left\{ #1 \right\}}
\newcommand{\eg}{e.g.\xspace}
\newcommand{\ie}{i.e.\xspace}
\newcommand{\iid}{i.i.d.\xspace}
\newtheorem{theorem}{Theorem}
\newtheorem{lemma}{Lemma}
\newtheorem{remark}{Remark}
\newtheorem{condition}{Condition}
\theoremstyle{definition}
\newtheorem{definition}{Definition}
\newtheorem{example}{Example}
\newtheorem{question}{Question}
\newcommand{\Id}{\mathrm{Id}}
\begin{document}

% If your paper is accepted and the title of your paper is very long,
% the style will print as headings an error message. Use the following
% command to supply a shorter title of your paper so that it can be
% used as headings.
%
%\runningtitle{I use this title instead because the last one was very long}

% If your paper is accepted and the number of authors is large, the
% style will print as headings an error message. Use the following
% command to supply a shorter version of the authors names so that
% they can be used as headings (for example, use only the surnames)
%
%\runningauthor{Surname 1, Surname 2, Surname 3, ...., Surname n}

\twocolumn[

\aistatstitle{Learning in Gated Neural Networks}
\aistatsauthor{Ashok Vardhan Makkuva$^{*}$ \And Sreeram Kannan$^{\dagger}$ \And  Sewoong Oh$^{\dagger}$ \And Pramod Viswanath$^{*}$ }

\aistatsaddress{$^{*}$University of Illinois at Urbana-Champaign \And  $^{\dagger}$University of Washington} ]

\begin{abstract}
Gating is a key feature in modern neural networks including LSTMs, GRUs and sparsely-gated deep neural networks. The backbone of such gated networks is a mixture-of-experts layer, where several experts make regression decisions and gating controls how to weigh the decisions in an input-dependent manner. Despite having such a prominent role in both modern and classical machine learning, very little is understood about parameter recovery of mixture-of-experts since gradient descent and EM algorithms are known to be stuck in local optima in such models.

In this paper, we perform a careful analysis of the optimization landscape and show that with appropriately designed loss functions, gradient descent can indeed learn the parameters of a MoE accurately. A key idea underpinning our results is the design of two {\em distinct} loss functions, one for recovering the expert parameters and another for recovering the gating parameters. We demonstrate the first sample complexity results for parameter recovery in this model for any algorithm and demonstrate significant performance gains over standard loss functions in numerical experiments. 
\end{abstract}

\begin{figure*}[t]
    \centering
    \begin{subfigure}[b]{0.32\textwidth}
        \includegraphics[width=\textwidth]{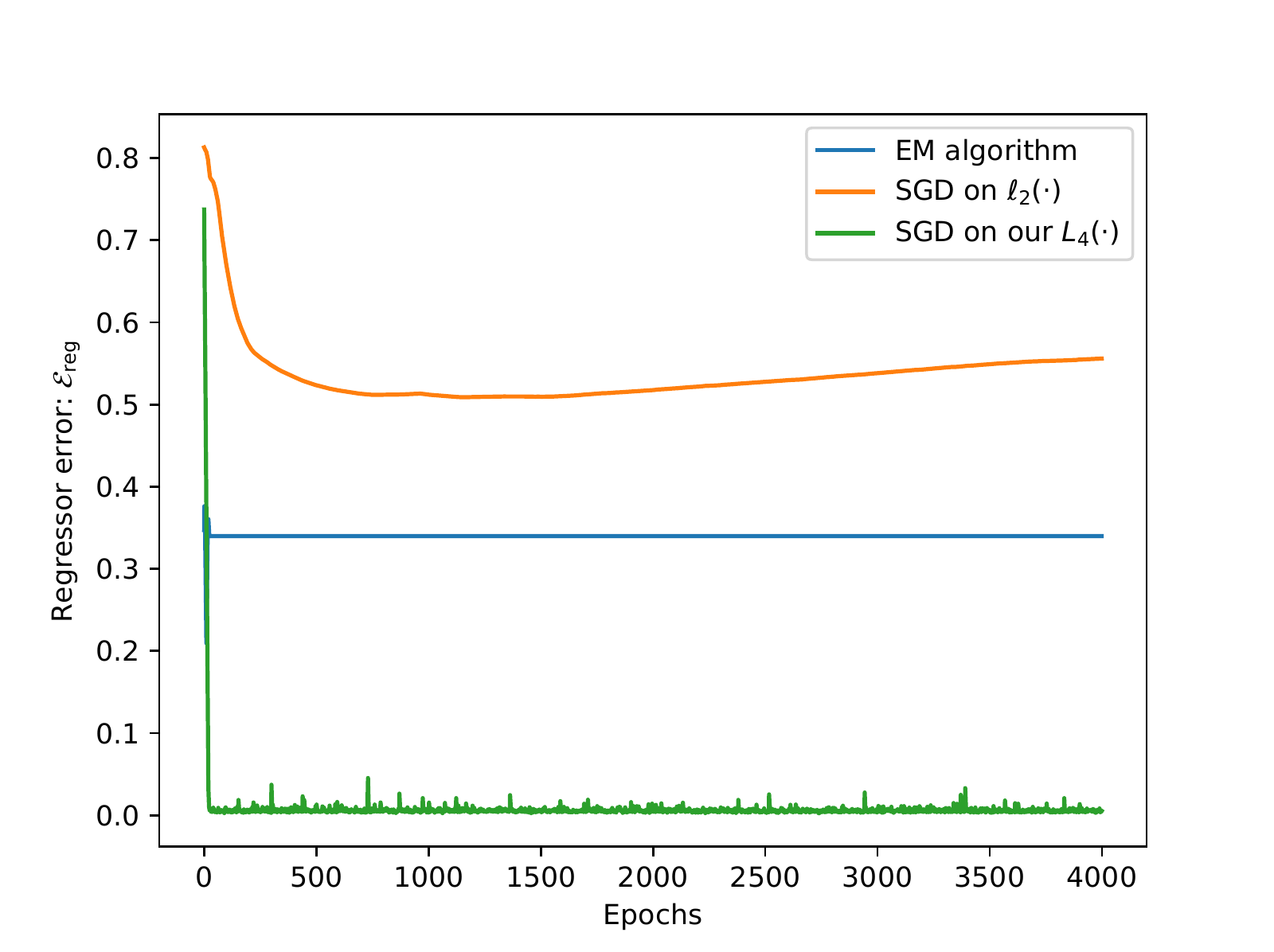}
        \caption{Regressor error}
        \label{fig:all_regressor_error}
    \end{subfigure}
%    ~ %add desired spacing between images, e. g. ~, \quad, \qquad, \hfill etc. 
%      %(or a blank line to force the subfigure onto a new line)
    \begin{subfigure}[b]{0.32\textwidth}
        \includegraphics[width=\textwidth]{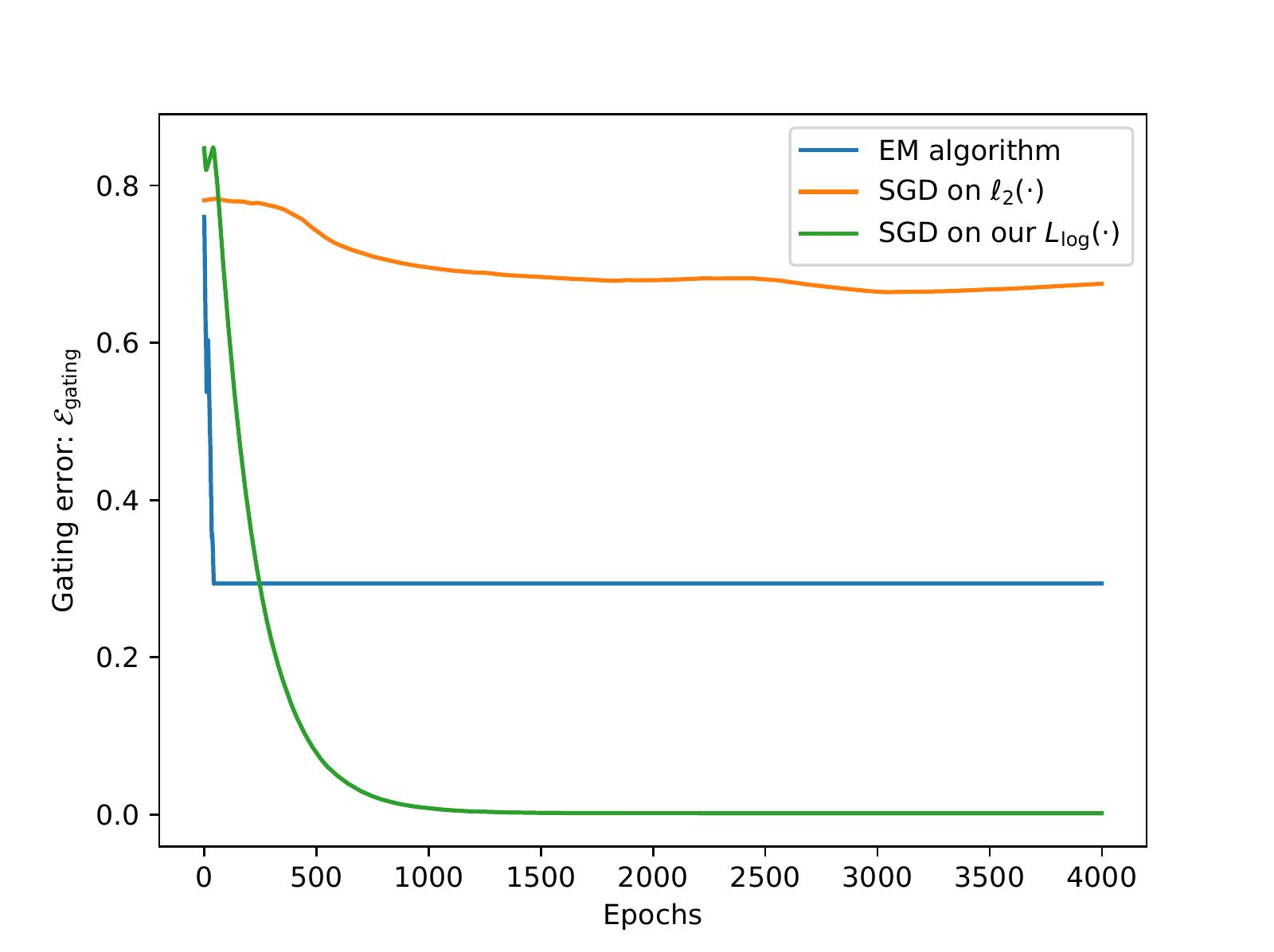}
        \caption{Gating error}
        \label{fig:all_classifier_error}
    \end{subfigure}
    ~ %add desired spacing between images, e. g. ~, \quad, \qquad, \hfill etc. 
    %(or a blank line to force the subfigure onto a new line)
    \begin{subfigure}[b]{0.32\textwidth}
        \includegraphics[width=\textwidth]{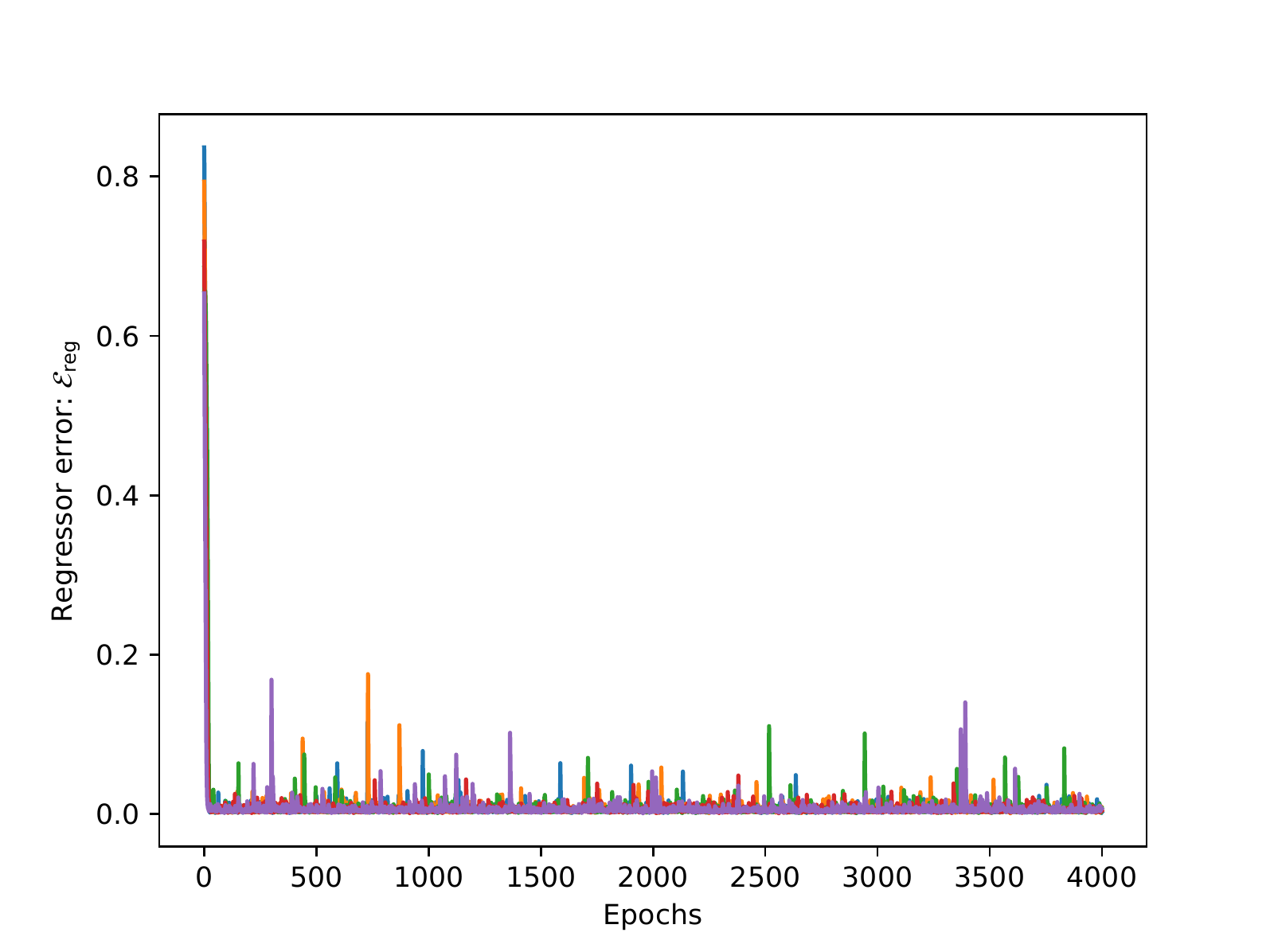}
        \caption{$L_4(\cdot)$ over different initializations}
        \label{fig:our_loss_multiple_runs}
    \end{subfigure}

%\includegraphics[width=0.5\textwidth]{}
%\hfill
%\includegraphics[width=0.5\textwidth]{}
\caption{ Our proposed losses $L_4$ (defined in \prettyref{eq:tensorloss}) and $L_{\mathrm{log}}$ (defined in \prettyref{eq:logloss}) to learn the respective regressor and gating parameters of a MoE model in \prettyref{eq:kmoe} achieve much better empirical results than the standard methods. }
\label{fig:regresslearn}
\end{figure*}

% !TEX root = gated_moe.tex
\section{Introduction}
In recent years, \emph{gated recurrent neural networks (RNNs)} such as LSTMs and GRUs have shown remarkable successes in a variety of challenging machine learning tasks such as machine translation, image captioning, image generation, hand writing generation, and speech recognition \citep{machinetranslation,imagecaption, speechrec,imagegen,handwritten}. A key interesting aspect and an important reason behind the success of these architectures is the presence of a \emph{gating mechanism} that dynamically controls the flow of the past information to the current state at each time instant. In addition, it is also well known that these gates prevent the vanishing (and exploding) gradient problem inherent to traditional RNNs \citep{hochreiter1997long}.

Surprisingly, despite their widespread popularity, there is very little theoretical understanding of these gated models. In fact, basic questions such as learnability of the parameters still remain open. Even for the simplest vanilla RNN architecture, this question was open until the very recent works of \cite{allen2018convergence} and \cite{allen2019can}, which provided the first theoretical guarantees of SGD for vanilla RNN models in the presence of non-linear activations. While this demonstrates that the theoretical analysis of these simpler models has itself been a challenging task, gated RNNs have an additional level of complexity in the form of gating mechanisms, which further enhances the difficulty of the problem. This motivates us to ask the following question:
\begin{question}
Given the complicated architectures of LSTMs/GRUs, can we find analytically tractable sub-structures of these models?
\end{question}

We believe that addressing the above question can provide new insights into a principled understanding of gated RNNs. In this paper, we make progress towards this and provide a positive answer to the question. In particular, we make a non-trivial connection that a GRU (gated recurrent unit) can be viewed as a time-series extension of a basic building block, known as \emph{Mixture-of-Experts (MoE)} \citep{JacJor,jor94}. In fact, much alike LSTMs/GRUs, MoE is itself a widely popular gated neural network architecture and has found success in a wide range of applications \citep{gpmoe,svmmoe,rasmussen2002infinite,YWG12,MaEb14,hmegp,deepmoe}. 
In recent years, there is also a growing interest in the fiels of natural language processing and computer vision to build complex neural networks incorporating MoE models to address challenging tasks such as machine translation \citep{gross2017hard, SMMD+17}. Hence the \emph{main goal} of this paper is to study MoE in close detail, especially with regards to learnability of its parameters.

The canonical MoE model is the following: Let $k \in \naturals$ denote the number of mixture components (or equivalently neurons). Let $\bx \in \reals^d$ be the input vector and $y \in \reals$ be the corresponding output. Then the relationship between $\bx$ and $y$ is given by: 
\begin{align}
y= \sum_{i=1}^k z_i \cdot g(\inner{\ba_i^\ast}{\bx}) + \xi, \quad \xi  \sim  \calN(0,\sigma^2),
\label{eq:kmoe}
\end{align}
where $g: \reals \to \reals$ is a non-linear activation function, $\xi$ is a Gaussian noise independent of $x$ and the latent Bernoulli random variable $z_i \in \{0,1\}$ indicates which expert has been chosen. In particular, only a single expert is active at any time, \ie $ \sum_{i=1}^k z_i =1$, and their probabilities are modeled by a soft-max function:
\begin{align*}
 \prob{z_i=1|\bx}=\frac{e^{\inner{\bw_i^\ast}{\bx}}}{\sum_{j=1}^k e^{\inner{\bw_j^\ast}{\bx}}}.
\end{align*}
Following the standard convention \citep{makkuva2018breaking,JacJor}, we refer to the vectors $\ba_i^\ast$ as \textit{regressors}, the vectors $\bw_i^\ast$ as either \textit{classifiers} or \textit{gating parameters}, and without loss of generality, we assume that $\bw_k^\ast=0$.

 Belying the canonical nature, and significant research effort, of the MoE model, the topic of learning MoE parameters is very poorly theoretically understood. In fact, the task of learning the parameters of a MoE, \ie $\ba_i^\ast$ and $\bw_i^\ast$, with provable guarantees is a long standing open problem for more than two decades \citep{SedghiA14a}. One of the key technical difficulties is that in a MoE, there is an inherent \emph{coupling} between the regressors $a_i^\ast$ and the gating parameters $w_i^\ast$, as can be seen from \prettyref{eq:kmoe}, which makes the problem challenging \citep{ho2019convergence}. In a recent work \citep{makkuva2018breaking}, the authors provided the first consistent algorithms for learning MoE parameters with theoretical guarantees. In order to tackle the aforementioned coupling issue, they proposed a clever scheme to first estimate the regressor parameters $\ba_i^\ast$ and then estimating the gating parameters $\bw_i^\ast$ using a combination of spectral methods and the EM algorithm. However, a major draw back is that this approach requires specially crafted algorithms for learning each of these two sets of parameters. In addition, they lack finite sample guarantees. Since SGD and its variants remain the de facto algorithms for training neural networks because of their practical advantages, and inspired by the successes of these gradient-descent based algorithms in finding global minima in a variety of non-convex problems, we ask the following question:
 
%  In the absence of the gating parameters, MoE reduces to the well-studied \emph{mixtures of generalized linear models (GLMs)}, which are typically learnt using spectral methods \cite{SedghiA14a,zhong16glm,YCS16}. However, none of the existing techniques for GLMs generalize since the presence of the gating parameters significantly complicates the super symmetric tensor structure needed for these tensor based methods. Another common approach to learn the parameters is to use a $l_2$ loss function,
%\begin{align}
%L_2(\bA,\bW) = \Expect[\|\hat{y}-y\|^2], \quad \hat{y}= \sum_{i \in [k]}p_i(\bx)g(\inner{\ba_i}{\bx}),
%\label{eq:l2}
%\end{align} 
%which is prone to bad local minima as demonstrated empirically in the original work of \cite{JacJor} (which we verify in our experiments too), which also emphasized the importance of the right objective function to learn the parameters. Moreover, the work-horse of parameter learning in latent variable models, the EM algorithm, is also prone to bad local minima as established in \cite{makkuva2018breaking}. This motivates us to ask the following question:

\begin{question}
How do we design objective functions amenable to efficient optimization techniques, such as SGD, with provable learning guarantees for MoE?
\end{question}

In this paper, we address this question in a principled manner and propose two non-trivial non-convex loss functions $L_4(\cdot)$ and $L_{\log}(\cdot)$ to learn the regressors and the gating parameters respectively. In particular, our loss functions possess nice landscape properties such as local minima being global and the global minima corresponding to the ground truth parameters. We also show that gradient descent on our losses can recover the true parameters with \emph{global/random initializations}. To the best of our knowledge, ours is the first GD based approach with finite sample guarantees to learn the parameters of MoE. While our procedure to learn $\{a_i^\ast\}$ and $\{w_i^\ast\}$ separately and the technical assumptions are similar in spirit to \cite{makkuva2018breaking}, our loss function based approach with provable guarantees for SGD is significantly different from that of \cite{makkuva2018breaking}. We summarize our main contributions below:

\begin{itemize}
\item \textbf{MoE as a building block for GRU:} We provide the first connection that the well-known GRU models are composed of basic building blocks, known as MoE. This link provides important insights into theoretical understanding of GRUs and further highlights the importance of MoE.

\item \textbf{Optimization landscape design with desirable properties:} We design \emph{two non-trivial} loss functions $L_4(\cdot)$ and $L_{\log}(\cdot)$ to learn the regressors and the gating parameters of a MoE separately. We show that our loss functions have nice landscape properties and are amenable to simple local-search algorithms. In particular, we show that SGD on our novel loss functions recovers the parameters with \emph{global/random} initializations.
\item \textbf{First sample complexity results:} We also provide the first sample complexity results for MoE. We show that our algorithms can recover the true parameters with accuracy $\varepsilon$ and with high probability, when provided with samples \emph{polynomial} in the dimension $d$ and $1/\varepsilon$. 
\end{itemize}

\textbf{Related work.}
Linear dynamical systems can be thought of as the linear version of RNNs. There is a huge literature on the topic of learning these linear systems \cite{alaeddini2018linear,arora2018towards,dean2017sample, dean2018safely, marecek2018robust, oymak2018non, simchowitz2018learning,hardt2018gradient}. However these works are very specific to the linear setting and do not extend to non-linear RNNs. \cite{allen2018convergence} and \cite{allen2019can} are two recent works to provide first theoretical guarantees for learning RNNs with ReLU activation function. However, it is unclear how these techniques generalize to the gated architectures. In this paper, we focus on the learnability of MoE, which are the building blocks for these gated models. 

While there is a huge body of work on MoEs (see \cite{YWG12,MaEb14} for a detailed survey), the topic of learning MoE parameters is theoretically less understood with very few works on it. \cite{emmoeanalysis} is one of the early works that showed the local convergence of EM.  In a recent work, \cite{makkuva2018breaking} provided the first consistent algorithms for MoE in the population setting using a combination of spectral methods and EM algorithm. However, they do not provide any finite sample complexity bounds. In this work, we provide a unified approach using GD to learn the parameters with finite sample guarantees. To the best of our knowledge, we give the first gradient-descent based method with consistent learning guarantees, as well as the first finite-sample guarantee for any algorithm. The topic of designing the loss functions and analyzing their landscapes is a hot research topic in a wide variety of machine learning problems: neural networks \citep{identitymatters,kawa,reluanalysis,electron_proton,dhillon, landscapedesign,gao2018learning}, matrix completion \citep{bhoja}, community detection \citep{bandeira}, orthogonal tensor decomposition \citep{escapesaddle}. In this work, we present the first objective function design and the landscape analysis for MoE. 

\textbf{Notation.} We denote $\ell_2$-Euclidean norm by $\norm{\cdot}$. $[d] \define \{1,2,\ldots,d\}$. $\{e_i\}_{i=1}^d$ denotes the standard basis vectors in $\reals^d$. We denote matrices by capital letters like $A,W$, etc. For any two vectors $x,y \in \reals^d$, we denote their Hadamard product by $x \odot y$. $\sigma(\cdot)$ denotes the sigmoid function $\sigma(z)=1/(1+e^{-z}),z \in \reals$. For any $z=(z_1,\ldots,z_k) \in \reals^k$, $\mathrm{softmax}_i(z)=\exp(z_i)/(\sum_j \exp(z_j))$. $\calN(mu,\Sigma)$ denotes the Gaussian distribution with mean $\mu \in \reals^d$ and covariance $\Sigma \in \reals^{d\times d}$. Through out the paper, we interchangeably denote regressors as $\{a_i\}$ or $A$, and gating parameters as $\{w_i\}$ or $W$.

{\bf Overview.} The rest of the paper is organized as follows: In \prettyref{sec:connection}, we establish the precise mathematical connection between the well known GRU model and the MoE model. Building upon this correspondence, which highlights the importance of MoE, in \prettyref{sec:optim_landscape} we design two novel loss functions to learn the respective regressors and gating parameters of a MoE and present our theoretical guarantees. In \prettyref{sec:experiments}, we empirically validate that our proposed losses perform much better than the current approaches on a variety of settings. 

%We denote the scalars with lower case letters like $y$, Euclidean vectors by bold face lower case letters like $\bx$, and matrices by capital letters $\bA$. $\bx^{\otimes 4}=\bx \otimes \bx \otimes \bx \otimes \bx$ denotes the fourth-order tensor product of $\bx$. $\sym(\ba \otimes \bb \otimes \ba \otimes \bb)$ denotes the symmetrized version of $\ba \otimes \bb \otimes \ba \otimes \bb$ where we add all possible permutations of the indices. 

% !TEX root = gated_moe.tex
\section{GRU as a hierarchical MoE}
\label{sec:connection}
In this section, we show that the recurrent update equations for GRU can be obtained from that of MoE, described in \prettyref{eq:kmoe}. In particular, we show that GRU can be viewed as a hierarchical MoE with depth-2. To see this, we restrict to the setting of a $2$-MoE, \ie let $k=2$ and $(a_1^\ast, a_2^\ast)= (a_1, a_2)$, and $(w_1^\ast,w_2^\ast)=(w,0)$ in \prettyref{eq:kmoe}. Then we obtain that
\begin{align}
y = (1-z) ~ g(a_1^\top x) + z ~ g(a_2^\top x) + \xi, 
\label{eq:k2moe}
\end{align}
where $z \in \{0,1\} $ and $\prob{z=0|x} = \sigma(w^\top x)$. Since $\xi$ is a zero mean random variable independent of $x$, taking conditional expectation on both sides of \prettyref{eq:k2moe} yields that
\begin{align*}
y(x) & \define \Expect[y|x] \\
&= \sigma(w^\top x)g(a_1^\top x)+(1-\sigma(w^\top x)) g(a_2^\top x) \in \reals.
\end{align*}
Now letting the output $\by(x) \in \reals^m$ to be a vector and allowing for different gating parameters $\{w_i\}$ and regressors $\{(a_{1i},a_{2i})\}$ along each dimension $i =1,\ldots, m$, we obtain
\begin{align}
y(x) = (1-z(x)) \odot g(A_1 x) + z(x) \odot g(A_2 x),
\label{eq:vector_moe}
\end{align}
where $z(x)=(z_1(x),\ldots,z_m(x))^\top$ with $z_i(x)=\sigma(\bw_i^\top \bx)$, and $A_1,A_2 \in \reals^{m\times d}$ denote the matrix of regressors corresponding to first and second experts respectively. 

We now show that \prettyref{eq:vector_moe} is the basic equation behind the updates in GRU. Recall that in a GRU, given a time series $\{(x_t, y_t)\}_{t=1}^T$ of sequence length $T$, the goal is to produce a sequence of hidden states $\{h_t\}$ such that the output time series $\hat{y}_t = f(C h_t)$ is close to $\{y_t\}$ in some well-defined loss metric, where $f$ denotes the non-linear activation of the last layer. The equations governing the transition dynamics between $\{x_t \}$ and $\{h_t\}$ at any time $t \in [T]$ are given by \citep{gru}:
\begin{align*}
h_t = (1-z_t) \odot h_{t-1} +z_t \odot \tilde{h}_t,\\
\tilde{h}_t = g(U_h x_t + W_h (r_t \odot h_{t-1})),
\end{align*}
where $z_t$ and $r_t$ denote the update and reset gates, which are given by
\begin{align*}
 z_t &=\sigma(U_z x_t+W_z h_{t-1}) , \quad  r_t = \sigma(U_r x_t+W_r h_{t-1}),
\end{align*}
where the matrices $U$ and $W$ with appropriate subscripts are parameters to be learnt. While the gating activation function $\sigma$ is modeled as sigmoid for the ease of obtaining gradients while training, their intended purpose was to operate as binary valued gates taking values in $\{0,1\}$. Indeed, in a recent work \cite{li2018towards}, the authors show that binary valued gates enhance robustness with more interpretability and also give better performance compared to their continuous valued counterparts. In view of this, letting $\sigma$ to be the binary threshold function $\mathds{1}\{x \geq 0\}$, we obtain that
\begin{equation}
\begin{aligned}
h_t &= (1-z_t) \odot h_{t-1} + z_t \odot ( (1-r_t)\odot g(U_h x_t) \\
& \hspace{5em} + r_t \odot g(U_h x_t+ W_h h_{t-1})).
\end{aligned}
\label{eq:grumoe}
\end{equation}
Letting $\bx=(\bx_t,\bh_{t-1})$ and $\by(\bx)=\bh_t$ in \prettyref{eq:vector_moe} with second expert $g(A_2 x)$ replaced by a $2$-MoE, we can see from \prettyref{eq:grumoe} that GRU is a depth-$2$ hierarchical MoE. This is also illustrated in \prettyref{fig:gru_as_moe}.
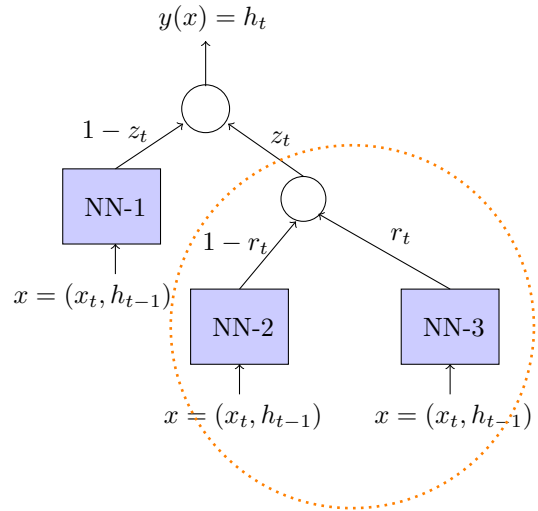
\begin{figure}[t]
\centering

\begin{tikzpicture}

\node[circle, draw=black, inner sep = 6.4 pt] at (-0.4,1.8) {};
\draw[->] (-0.4,2.1) -- (-0.4,2.7);
\node at (-0.3,3.0) {$y(x)=\bh_t$};

\filldraw[fill=blue!20!white, draw=black] (-2.3,0) -- (-1.0,0) -- (-1.0,1) -- (-2.3,1) -- cycle;
\node at (-1.6,0.5) {NN-1};

\node at (-1.9,-0.7) {$x=(\bx_t, \bh_{t-1})$};
\draw[->] (-1.6,-0.4)  -- (-1.6,0);

\node[circle, draw=black, inner sep=6pt] at (0.9,0.6) {};

\node at (-1.6 , 1.5) {$1-z_t$};
\draw[->] (-1.6,1)  -- (-0.7,1.6);

\node at (0.6  ,  1.4) {$z_t$};
\draw[->] (0.9,0.9)  -- (-0.1,1.6);

\filldraw[fill=blue!20!white, draw=black] (-0.6,-1.6) -- (0.7,-1.6) -- (0.7,-0.6) -- (-0.6,-0.6) -- cycle;
\node at (0.1,-1.1) {NN-2};

\node at (0.10,-2.3) {$x=(\bx_t, \bh_{t-1})$};
\draw[->] (0.05 , -2.0)  -- (0.05, -1.6);

\draw[->] (0.05,-0.6) -- (0.8,0.3) ;
\node at (0.0,0.0) {$1-r_t$};

\filldraw[fill=blue!20!white, draw=black] (2.2,-1.6) -- (3.5,-1.6) -- (3.5,-0.6) -- (2.2,-0.6) -- cycle;
\node at (2.9,-1.1) {NN-3};

\node at (2.90,-2.3) {$x=(\bx_t, \bh_{t-1})$};
\draw[->] (2.85 , -2.0)  -- (2.85, -1.6);

\draw[->] (2.85,-0.6) -- (1.1,0.4) ;
\node at (2.2,0.1) {$r_t$};

\node[draw,line width=1.0pt,color=orange,dotted,circle,inner ysep=15pt,fit={(-0.7,-1.1) (3.8,-1.1)}] {};

\end{tikzpicture}
\caption{GRU as a hierarchical $2$-MoE. The dotted circled portion indicates the canonical $2$-MoE in \prettyref{eq:k2moe}. NN-1, NN-2 and NN-3 denote specific input-output mappings obtained from \prettyref{eq:grumoe}. }
\label{fig:gru_as_moe}
\end{figure}

Note that in \prettyref{fig:gru_as_moe}, NN-1 models the mapping $(x_t,h_{t-1}) \mapsto h_{t-1}$, NN-2 represents $(x_t,h_{t-1}) \mapsto g(U_h x_{t})$, and NN-3 models $(x_t,h_{t-1}) \mapsto g(U_h x_t + W_h h_{t-1})$. Hence, this is slightly different from the traditional MoE setting in \prettyref{eq:kmoe} where the same activation $g(\cdot)$ is used for all the nodes. Nonetheless, we believe that studying this canonical model is a crucial first step which can shed important insights for a general setting.

% !TEX root = gated_moe.tex
\section{Optimization landscape design for MoE}
\label{sec:optim_landscape}
In the previous section, we presented the mathematical connection between the GRU and the MoE. In this section, we focus on the learnability of the MoE model and design two novel loss functions for learning the regressors and the gating parameters separately.
%and present our theoretical guarantees.
%In \prettyref{sec:L_4}, we define a new regressor loss $L_4$ and present our theoretical results. In \prettyref{sec:gradgating}, we present the gating loss $L_{\mathrm{log}}$ followed by our theoretical guarantees.

\subsection{Loss function for regressors: $L_4$}
\label{sec:L_4}

To motivate the need for loss function design in a MoE, first we take a moment to highlight the issues with the traditional approach of using the mean square loss $\ell_2$. If $(x,y)$ are generated according to the ground-truth MoE model in \prettyref{eq:kmoe}, $\ell_2(\cdot)$ computes the quadratic cost between the expected predictions $\hat{y}$ and the ground-truth $y$, \ie
\begin{align*}
\ell_2(\{a_i\}, \{w_i\}) = \Expect_{(x,y)}\|\hat{y}(x)-y\|^2,
\label{eq:l2_loss}
\end{align*} 
where $\hat{y}(x)= \sum_{i} \mathrm{softmax}_i(w_1^\top x, \ldots, w_{k-1}^\top x,0)~g(\ba_i^\top \bx)$ is the predicted output, and $\{a_i \},\{w_i\}$ denote the respective regressors and gating parameters. It is well-known that this mean square loss is prone to bad local minima as demonstrated empirically in the earliest work of \cite{JacJor} (we verify this in \prettyref{sec:experiments} too), which also emphasized the importance of the right objective function to learn the parameters. Note that the bad landscape of $\ell_2$ is not just unique to MoE, but also widely observed in the context of training neural network parameters \citep{livni2014computational}. In the one-hidden-layer NN setting, some recent works \citep{landscapedesign, gao2018learning} addressed this issue by designing new loss functions with good landscape properties so that standard algorithms like SGD can provably learn the parameters. However these methods do not generalize to the MoE setting since they crucially rely on the fact that the coefficients $z_i$ appearing in front of the activation terms $g(\inner{a_i^\ast}{x})$ in \prettyref{eq:kmoe}, which correspond to the linear layer weights in NN, are constant. Such an assumption does not hold in the context of MoEs because the gating probabilities depend on $x$ in a parametric way through the softmax function and hence introducing the coupling between $w_i^\ast$ and $a_i^\ast$ (a similar observation was noted in \cite{makkuva2018breaking} in the context of spectral methods).

%However, the mean square error $\ell_2$ does not respect this modular structure and learning these two different sets of parameters jointly through SGD entangles them in a complicated manner leading to bad minima. Can we design loss functions adhering to this special structure inherent to MoE?
%
%common joint-loss function approaches. Observe from \prettyref{eq:kmoe} that the parameters $\bw_i^\ast$ and $\ba_i^\ast$ play different roles in generating the data: $\bw_i^\ast$ are used for multi-class classification whereas $\ba_i^\ast$ for regression.
%
%We answer this question precisely through construction of a $4^{\mathrm{th}}$ and $2^{\mathrm{nd}}$-order super symmetric tensors involving \emph{just} the regressors $\ba_i^\ast$ (see \prettyref{thm:fourthtensor}) and constructing a loss function using its projections along the parameter space. While the traditional approach to construct such tensors involve forming the cross moments between the input $\bx$ and the output $y$ in mixtures of GLMs, the inherent coupling between $\ba_i^\ast$ and $\bw_i^\ast$ in a MoE complicates these moments, rendering these techniques ineffective (see Proof of \prettyref{thm:fourthtensor}). 

In order to address the aforementioned issues, inspired by the works of \cite{landscapedesign} and \cite{gao2018learning}, we design a novel loss function $L_4(\cdot)$ to learn the regressors first. Our loss function depends on two distinct special transformations on both the input $x \in \reals^d$ and the output $y \in \reals$. For the output, we consider the following transformations:
%``Quartic and Quadratic Transform (QQT)" for the output $y$, \ie
\begin{align}
\calQ_4(y) \define y^4+\alpha y^3+ \beta y^2+ \gamma y, \hspace{0.4em} \calQ_2(y) \define y^2+ \delta y,
\end{align}
where the set of coefficients $(\alpha,\beta,\gamma,\delta)$ are dependent on the choice of non-linearity $g$ and noise variance $\sigma^2$.  These are obtained by solving a simple linear system (see \prettyref{app:non_linearclass}). For the special case $g=\Id$, which corresponds to linear activations, the Quartic transform is $\calQ_4(y)=y^4-6y^2(1+\sigma^2)+3+3\sigma^4-6\sigma^2$ and the Quadratic transform is $\calQ_2(y)=y^2-(1+\sigma^2)$. For the input $\bx$, we assume that $ x \sim \calN(0,I_d)$, and for any two fixed $u,v \in \reals^d$, we consider the projections of multivariate-Hermite polynomials \cite{grad1949note,holmquist1996d, scorebusiness} along these two vectors, \ie
\begin{align*}
t_3(\bu,\bx) &= \frac{(\bu^\top \bx)^2 - \norm{\bu}^2}{c'_{g,\sigma}}, \\
t_2(\bu, \bx) &= \frac{(\bu^\top \bx)^4 - 6 \norm{\bu}^2 (\bu^\top \bx)^2 + 3\norm{\bu}^4}{c_{g,\sigma}},\\
t_1(\bu, \bv ,\bx) &=((\bu^\top \bx)^2(\bv^\top \bx)^2 - \norm{\bu}^2(\bv^\top \bx)^2 \\
& \hspace{1.0em}- 4(\bu^\top \bx)(\bv^\top \bx)(\bu^\top \bv) - \norm{\bv}^2 (\bu^\top \bx)^2\\
& \hspace{1.0em} +\norm{\bu}^2 \norm{\bv}^2+ 2 (\bu^\top \bv)^2)/c_{g,\sigma},
\end{align*}
where $c_{g,\sigma}$ and $c'_{g,\sigma}$ are two non-zero constants depending on $g$ and $\sigma$. These transformations $(t_1, t_2, t_3)$ on the input $x$ and $(\calQ_4, \calQ_2)$ on the output $y$ can be viewed as extractors of higher order information from the data. The utility of these transformations is concretized in \prettyref{thm:landscape} through the loss function defined below.
Denoting the set of our regression parameters by the matrix $\bA^\top= [\ba_1|\ba_2|\ldots|\ba_k] \in \reals^{d \times k}$, we now define our objective function $L_4(\bA)$ as
\begin{align}
&L_4(\bA) \nonumber \\
&\define \sum_{\substack{i,j \in [k] \\i \neq j}} \Expect[\calQ_4(y) t_1(\ba_i,\ba_j,\bx)] - \mu \sum_{i \in [k]} \Expect[\calQ_4(y) t_2(\ba_i,\bx)] \nonumber \\  
& + \lambda \sum_{i \in [k]} \pth{\Expect[\calQ_2(y) t_3(\ba_i,\bx)]-1}^2
+\frac{\delta}{2} \norm{\bA}_F^2,
\label{eq:tensorloss}
\end{align}
where $\mu,\lambda, \delta>0$ are some positive regularization constants. Notice that $L_4$ is defined as an expectation of terms involving the data transformations: $\calQ_4, \calQ_2, t_1, t_2,$ and $t_3$. Hence its gradients can be readily computed from finite samples and is amenable to standard optimization methods such as SGD for learning the parameters. Moreover, the following theorem highlights that the landscape of $L_4$ does not have any spurious local minima.

%Notice that the loss $L$ is a fourth-order polynomial in our parameters $\bA$ and the data $(\bx,y)$.

\begin{theorem}[Landscape analysis for learning regressors]
\label{thm:landscape}
Under the mild technical assumptions of \cite{makkuva2018breaking}, the loss function $L_4$ does not have any spurious local minima. More concretely, let $\varepsilon>0$ be a given error tolerance. Then we can choose the regularization constants $\mu,\lambda$ and the parameters $\varepsilon,\tau$ such that if $A$ satisfies
\begin{align*}
\norm{\nabla L_4(\bA)}_2 \leq \varepsilon, \quad \nabla^2 L_4(\bA) \succcurlyeq  -\tau/2,
\end{align*}
then $(\bA^\dagger)^\top=   \bP \bD \Gamma \bA^\ast+ \bE$, where $\bD$ is a diagonal matrix with entries close to $1$, $\Gamma$ is a diagonal matrix with $\Gamma_{ii}=\sqrt{\Expect[p_i^\ast(\bx)]}$, $\bP$ is a permutation matrix and $\norm{\bE} \leq \varepsilon_0$. Hence every approximate local minimum is $\varepsilon$-close to the global minimum. 
\end{theorem}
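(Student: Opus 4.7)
The plan is to reduce $L_4(\bA)$ to a polynomial objective in the inner products $\langle \ba_i, \ba_j^\ast\rangle$ whose landscape can then be analyzed by arguments in the spirit of \cite{landscapedesign} and \cite{gao2018learning}. First, I would compute the population value of each expectation appearing in \prettyref{eq:tensorloss}. The transformations $\calQ_4, \calQ_2$ on $y$ are tailored (by a small linear system involving moments of $g(\cdot)$ under a standard Gaussian and the noise variance $\sigma^2$) so that for any fixed $u$,
\begin{align*}
\Expect[\calQ_4(y)\,(u^\top x)^k] &= \textstyle c \sum_i \Expect[p_i^\ast(x)] (u^\top \ba_i^\ast)^k, \quad k=2,4,
\end{align*}
after subtracting lower-order moments. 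Similarly, $t_1, t_2, t_3$ are projections of the multivariate Hermite polynomials of degrees $4,4,2$, which annihilate all lower-degree $x$-moments under $\calN(0, I_d)$. Combining the $\calQ$ and $t$ transforms and taking expectations should collapse $L_4(\bA)$ (modulo an additive constant) to
\begin{align*}
L_4(\bA) = \sum_{\ell}\pi_\ell^\ast \Big[ &\sum_{i \neq j} (\ba_i^\top \ba_\ell^\ast)^2(\ba_j^\top \ba_\ell^\ast)^2 \\
&- \mu \sum_i (\ba_i^\top \ba_\ell^\ast)^4\Big] + \lambda \sum_i \Big(\sum_\ell \pi_\ell^\ast (\ba_i^\top \ba_\ell^\ast)^2 - 1\Big)^2 + \tfrac{\delta}{2}\|\bA\|_F^2,
\end{align*}
where $\pi_\ell^\ast \define \Expect[p_\ell^\ast(x)]$ are the marginal gating probabilities.

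Second, I would perform a landscape analysis of this reduced polynomial. Introducing the change of variables $U_{i\ell} = \sqrt{\pi_\ell^\ast}\,\langle \ba_i, \ba_\ell^\ast\rangle$ puts the objective into the canonical fourth-order ``cross-vs-self'' form for which the first- and second-order KKT conditions admit a clean classification. Writing out $\nabla L_4(\bA) = 0$ and $\nabla^2 L_4(\bA) \succcurlyeq 0$ in terms of $U$, one shows that each row of $U$ must have a single non-negligible coordinate (otherwise a Hessian direction mixing two large coordinates $U_{i\ell}, U_{im}$ gives a strictly negative curvature direction, violating the second-order condition), and that the non-negligible coordinates across rows $i$ form a permutation (otherwise two rows collide on the same column and the ``$i \neq j$'' cross term produces a negative eigenvalue). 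Picking $\mu$ in the appropriate range (essentially $\mu > 2$ after the rescaling) rules out trivial minima at the origin, and the $\lambda$-term forces $|U_{i\pi(i)}|$ to sit near $1$, i.e.\ $\langle \ba_i, \ba_{\pi(i)}^\ast\rangle \approx 1/\sqrt{\pi_{\pi(i)}^\ast}$, so that $\bA^\dagger = \bP \bD \Gamma \bA^\ast$ with $\bD$ close to $\bI$ and $\Gamma_{ii} = \sqrt{\pi_i^\ast}$, as claimed.

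Third, I would convert this exact characterization into the approximate statement. Starting from the quantitative $(\varepsilon,\tau)$-second-order point, I would bound $\|U - U^\ast\|$ in terms of $\varepsilon$ and $\tau$ by a perturbation of the above argument: the gradient bound controls closeness of the critical equations, and strong concavity/positive curvature of the unperturbed objective in directions orthogonal to the permutation family converts this into an $\varepsilon_0$-bound on the error matrix $\bE$; choosing $\mu,\lambda$ so that the eigenvalue gap dominates $\tau$ gives the promised $\varepsilon_0 = O(\sqrt{\varepsilon})$ scaling. Finite-sample concentration of the transformed moments (which are polynomials in a Gaussian of bounded degree) then propagates through the same argument.

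I expect the \emph{main obstacle} to be \textbf{Step 1}: carefully carrying out the moment computation so that the coupling between the gating probabilities $p_i^\ast(x) = \mathrm{softmax}_i(\langle \bw_i^\ast, x\rangle)$ and the regressors $\ba_i^\ast$ actually cancels. This is exactly the coupling that derailed earlier attempts, and the reason for the Hermite-based choice of $t_1, t_2, t_3$ is that their orthogonality kills all lower-order interactions with the softmax, leaving only the clean quartic form in $\ba_i^\ast$ with the marginal weights $\pi_i^\ast$. Verifying this cancellation rigorously — and showing that the assumptions of \cite{makkuva2018breaking} (non-degeneracy of $\{\ba_i^\ast\}$ and of the activation $g$) suffice to make the linear system defining $(\alpha,\beta,\gamma,\delta)$ solvable and the leading constants $c_{g,\sigma}, c'_{g,\sigma}$ non-zero — is the technical heart of the argument; everything downstream is a fairly standard fourth-order landscape analysis.
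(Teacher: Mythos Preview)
Your proposal is correct and follows essentially the same route as the paper: the paper proves the alternate polynomial characterization of $L_4$ (your Step~1) as a separate lemma via the tensor identity $\Expect[\calQ_4(y)\,\calS_4(x)] = c_{g,\sigma}\sum_i \Expect[p_i^\ast(x)]\,(\ba_i^\ast)^{\otimes 4}$, and then invokes Theorem~C.5 of \cite{landscapedesign} verbatim for the landscape analysis (your Steps~2--3). One small sharpening of your Step~1 discussion: the Hermite orthogonality of $t_1,t_2,t_3$ is not by itself what decouples the softmax from the regressors---the structural assumption $\{\bw_i^\ast\}\perp\mathrm{span}\{\ba_j^\ast\}$ from \cite{makkuva2018breaking} is what ensures that, after Stein-type integration by parts, the derivatives landing on $p_i^\ast(x)$ contribute nothing in the $\ba_j^\ast$ directions, leaving only the marginal weights $\pi_i^\ast=\Expect[p_i^\ast(x)]$.
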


\textbf{Intuitions behind the theorem and the special transforms:} While the transformations and the loss $L_4$ defined above may appear non-intuitive at first, the key observation is that $L_4$ can be viewed as a fourth-order polynomial loss in the parameter space, \ie
\begin{align}
&L_4(\bA) \nonumber \\
 & = \sum_{m \in [k]} \Expect[p_m^\ast(\bx)] \sum_{\substack{i \neq j \\ i,j \in [k]}} \inner{\ba_m^\ast}{\ba_i}^2 \inner{\ba_m^\ast}{\ba_j}^2 \nonumber \\
 & - \mu \sum_{m,i \in [k]} \Expect[p_m^\ast(\bx)] \inner{\ba_m^\ast}{\ba_i}^4   \label{eq:alternate} \\
& +\lambda \sum_{i \in [k]} (\sum_{m \in [k]} \Expect[p_m^\ast(\bx)] \inner{\ba^\ast_m}{\ba_i}^2-1)^2 +  \frac{\delta}{2} \norm{\bA}_F^2 \nonumber,
\end{align}
where $p_i^\ast$ refers to the softmax probability for the $i^{th}$ label with true gating parameters, \ie $p_i^\ast(x)=\mathrm{softmax}_i(\inner{w_1^\ast}{x},\ldots,\inner{w_{k-1}^\ast}{x},0)$. This alternate characterization of $L_4(\cdot)$ in \prettyref{eq:alternate} is the crucial step towards proving \prettyref{thm:landscape}. Hence these specially designed transformations on the data $(x,y)$ help us to achieve this objective. Given this viewpoint, we utilize tools from \cite{landscapedesign}, where a similar loss involving fourth-order polynomials were analyzed in the context of $1$-layer ReLU network, to prove the desired landscape properties for $L_4$. The full details behind the proof are provided in \prettyref{app:proofregressor}. Moreover, in \prettyref{sec:experiments} we empirically verify that the technical assumptions are only needed for the theoretical results and that our algorithms are robust to these assumptions, and work equally well even when we relax them.
%\begin{hproof} The crucial step in analyzing the landscape of $L$ is to give an alternative characterization for it in the form of
%\begin{align}
%L_4(\bA) &= \sum_{m \in [k]} \Expect[p_m^\ast(\bx)] \sum_{\substack{i \neq j \\ i,j \in [k]}} \inner{\ba_m^\ast}{\ba_i}^2 \inner{\ba_m^\ast}{\ba_j}^2 - \mu \sum_{m,i \in [k]} \Expect[p_m^\ast(\bx)] \inner{\ba_m^\ast}{\ba_i}^4 \nonumber \\
%&\hspace{3em} +\lambda \sum_{i \in [k]} (\sum_{m \in [k]} \Expect[p_m^\ast(\bx)] \inner{\ba^\ast_m}{\ba_i}^2-1)^2 +  \frac{\delta}{2} \norm{\bA}_F^2. \label{eq:alternate}
%\end{align}
%We can prove \prettyref{eq:alternate} using the tensor structure of $\calT_4,\calT_2$ and the definitions of $t_1,t_2$ and $t_3$. We can notice that $L$ is a fourth-order polynomial in our parameters $\bA$. Now we use the techniques from \cite{landscapedesign}, where the authors analyzed a similar tensor based loss for learning a $1$-layer ReLU network, to characterize the landscape of $L$ in our setting. We provide the full details in .
%\end{hproof}

In the finite sample regime, we replace the population expectations in \prettyref{eq:tensorloss} with sample average to obtain the empirical loss $\hat{L}$.  The following theorem establishes that $\hat{L}$ too inherits the same landscape properties of $L$ when provided enough samples.

%\begin{align*}
%\hat{L}(\bA) \define \sum_{\substack{i,j \in [k] \\i \neq j}} \hat{\Expect}[\calQ_4(y) t_1(\ba_i,\ba_j,\bx)] - \mu \sum_{i \in [k]} \hat{\Expect}[\calQ_4(y) t_2(\ba_i,\bx)] + \lambda \sum_{i \in [k]} \pth{ \hat{\Expect}[\calQ_2(y) t_3(\ba_i,\bx)]-1}^2+\frac{\delta}{2} \norm{\bA}_F^2,
%\end{align*}
%where $\hat{\Expect}$ refers to the empirical expectation with respect to the \iid samples $\{(\bx_i,y_i)\}_{i=1}^n$. The following theorem establishes that $\hat{L}$ too inherits the same landscape properties of $L$ when provided enough samples.
\begin{theorem}[Finite sample landscape]
\label{thm:finitelandscape}
There exists a polynomial $\mathrm{poly}(d,1/\varepsilon)$ such that whenever $n \geq \mathrm{poly}(d,1/\varepsilon)$, $\hat{L}$ inherits the same landscape properties as that of $L$ established in \prettyref{thm:landscape} with high probability. Hence stochastic gradient descent on $\hat{L}$ converges to an approximate local minima which is also close to a global minimum in time polynomial in $d,1/\varepsilon$.
\end{theorem}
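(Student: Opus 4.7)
The plan is to transfer the population landscape of Theorem~1 to the empirical loss $\hat{L}$ via a uniform concentration argument over a compact parameter domain. First I would restrict attention to a ball $\mathcal{B} = \{A : \|A\|_F \leq R\}$ for some $R = \mathrm{poly}(d,1/\varepsilon)$; this is justified because the regularizer $\tfrac{\delta}{2}\|A\|_F^2$ in \prettyref{eq:tensorloss} forces any approximate stationary point of either $L_4$ or $\hat{L}_4$ to lie in such a ball (from $\nabla L_4(A) \approx 0$ one gets $\delta A$ balanced by polynomially-bounded quantities involving $\|a_i^\ast\|$). Outside $\mathcal{B}$ the regularizer dominates and no stationary points can appear, so it suffices to show uniform closeness of $L_4$ and $\hat{L}_4$ (together with their gradients and Hessians) on $\mathcal{B}$.

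The next step is pointwise concentration. For any fixed $A \in \mathcal{B}$, each summand of $\hat{L}_4(A)$ is an average of i.i.d.\ copies of a polynomial of total degree at most $8$ in the jointly (sub)Gaussian variables $(x,y)$: $\calQ_4(y)$ is degree $4$ in $y$, $t_1,t_2$ are degree $4$ in $x$, and $y$ is itself at most linear in the bounded quantities $g(\langle a_i^\ast, x\rangle)$ plus Gaussian noise. Using Gaussian hypercontractivity (or the Hanson--Wright / polynomial-chaos concentration inequalities), each scalar term concentrates around its expectation at rate $O(\mathrm{poly}(d,R)/\sqrt{n})$ with sub-exponential tails. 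The same applies to each entry of $\nabla \hat{L}_4(A)$ and $\nabla^2 \hat{L}_4(A)$, since differentiating in $A$ only lowers the degree and introduces additional polynomial factors in $\|x\|$ and $\|A\|$.

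To upgrade pointwise to uniform concentration over $\mathcal{B}$, I would use a standard $\eta$-net argument. Because $L_4$ and $\hat{L}_4$ are polynomials of bounded degree in $A$, their Hessians (and third derivatives, needed to Lipschitz-bound the Hessian itself) are bounded on $\mathcal{B}$ by $\mathrm{poly}(d,R)$ times averages of polynomials in $(x,y)$; by a union bound over the $n$ samples, these data-dependent Lipschitz constants are $\mathrm{poly}(d,R,\log n)$ with high probability. An $\eta$-net of $\mathcal{B}$ of cardinality $(R/\eta)^{O(dk)}$ then costs only a logarithmic factor in the sample complexity. Setting $\eta$ smaller than the Lipschitz bound divided by $\varepsilon$ and combining with the pointwise tail bounds yields, for $n \geq \mathrm{poly}(d,1/\varepsilon)$,
\begin{equation*}
\sup_{A \in \mathcal{B}} \bigl( |L_4(A)-\hat{L}_4(A)| + \|\nabla L_4(A) - \nabla \hat{L}_4(A)\| + \|\nabla^2 L_4(A) - \nabla^2 \hat{L}_4(A)\|_{\mathrm{op}} \bigr) \leq \varepsilon
\end{equation*}
with high probability. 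Feeding this into \prettyref{thm:landscape} with slightly inflated tolerances $(\varepsilon',\tau')$ shows that every $(\varepsilon,\tau/2)$-approximate second-order stationary point of $\hat{L}_4$ is an $O(\varepsilon)$-approximate second-order stationary point of $L_4$, hence $\varepsilon$-close to a global minimum. The polynomial-time SGD convergence then follows from standard saddle-point-escaping results (e.g., perturbed SGD), which require exactly the bounded gradient/Hessian Lipschitz structure established above.

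The main obstacle is controlling the Lipschitz constants of $\nabla \hat{L}_4$ and $\nabla^2 \hat{L}_4$ uniformly in the samples: because $y$ carries Gaussian noise and $x$ is Gaussian, the naive bounds on polynomial moments scale with high powers of $d$, and the net argument must be tight enough that these powers remain polynomial rather than exponential. Carefully exploiting the fact that $L_4$ is at most quartic in $A$ (so $\nabla^3 L_4$ is only linear in $A$), together with Gaussian concentration for the data-dependent coefficients, keeps the final sample complexity at $\mathrm{poly}(d,1/\varepsilon)$ as claimed.
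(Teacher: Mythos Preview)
Your approach is correct and is essentially the same as the paper's: the key observation is that $L_4(A)=\Expect[\ell(x,y,A)]$ with $\ell$ a low-degree polynomial in $(x,y,A)$, after which uniform concentration of $\hat L_4$, $\nabla\hat L_4$, and $\nabla^2\hat L_4$ over a compact ball follows from standard polynomial-chaos concentration plus an $\eta$-net, exactly as you outline. The paper simply invokes this machinery as a black box (Theorem~E.1 of \cite{landscapedesign}) rather than spelling out the argument; one minor point to tidy in your write-up is that the $\lambda\sum_i(\Expect[\calQ_2(y)t_3(a_i,x)]-1)^2$ term becomes a degree-$2$ U-statistic in the samples rather than a straight i.i.d.\ average, but this is handled by the same polynomial concentration tools.
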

\begin{remark}\normalfont
Notice that the parameters $\{\ba_i\}$ learnt through SGD are some permutation of the true parameters $\ba_i^\ast$ upto sign flips. This sign ambiguity can be resolved using existing standard procedures such as Algorithm 1 in \cite{landscapedesign}. In the remainder of the paper, we assume that we know the regressors upto some error $\varepsilon_{\mathrm{reg}}>0$ in the following sense: $\max_{i \in [k]}\norm{a_i-a_i^\ast}=\sigma^2 \varepsilon_{\mathrm{reg}}$.
\end{remark}

\subsection{Loss function for gating parameters: $L_{\log}$}
\label{sec:gradgating}

In the previous section, we have established that we can learn the regressors $\ba_i^\ast$ upto small error using SGD on the loss function $L_4$. Now we are interested in answering the following question: Can we design a loss function amenable to efficient optimization algorithms such as SGD with recoverable guarantees to learn the gating parameters? 

In order to gain some intuition towards addressing this question, consider the simplified setting of $\sigma=0$ and $\bA=\bA^\ast$. In this setting, we can see from \prettyref{eq:kmoe} that the output $y$ equals one of the activation values $g(\inner{\ba_i^\ast}{\bx})$, for $i \in [k]$, with probability $1$. Since we already have access to the true parameters, \ie $A=\bA^\ast$, we can see that we can exactly recover the hidden latent variable $z$, which corresponds to the chosen hidden expert for each sample $(\bx,y)$. Thus the problem of learning the \textit{classifiers} $\bw_i^\ast,\ldots,\bw_{k-1}^\ast$ reduces to a multi-class classification problem with label $z$ for each input $\bx$ and hence can be efficiently solved by traditional methods such as logistic regression. It turns out that these observations can be formalized to deal with more general settings (where we only know the regressors approximately and the noise variance is not zero) and that the gradient descent on the log-likelihood loss achieves the same objective. Hence we use the negative log-likelihood function to learn the classifiers, \ie
\begin{align}
&L_{\log}(\bW,\bA) \nonumber \\
&\define  - \Expect_{(x,y)}[\log P_{y|\bx}] \label{eq:logloss} \\
&=-\Expect \log \pth{\sum_{i \in [k]} \frac{e^{\inner{\bw_i}{\bx}}}{\sum_{j \in [k]}e^{\inner{\bw_j}{\bx}}} \cdot \calN(y|g(\inner{\ba_i}{\bx}),\sigma^2)}, \nonumber
\end{align}
where $\bW^\top=\begin{bmatrix}
\bw_1| \bw_2 | \ldots| \bw_{k-1}
\end{bmatrix}$.
Note that the objective \prettyref{eq:logloss} in not convex in the gating parameters $W$ whenever $\sigma \neq 0$. We omit the input distribution $P_x$ from the above negative log-likelihood since it does not depend on any of the parameters. We now define the domain of the gating parameters $\Omega$ as
\begin{align*}
\bW \in \Omega \define \{\bW \in \reals^{(k-1)\times d}: \| w_i \|_2 \leq R, \forall i \in [k-1] \},
\end{align*}
for some fixed $R>0$. Without loss of generality, we assume that $\bw_k=0$. Since we know the regressors approximately from the previous stage, \ie $\bA \approx \bA^\ast$, we run gradient descent only for the classifier parameters keeping the regressors fixed, \ie
\begin{align*}
\bW_{t+1} =\Pi_{\Omega}( \bW_t - \alpha \nabla_{\bW}  L_{\log}(\bW_t,\bA)),
\end{align*}
where $\alpha>0$ is a suitably chosen learning-rate, $\Pi_{\Omega}(\bW)$ denotes the projection operator which maps each row of its input matrix onto the ball of radius $R$, and $t>0$ denotes the iteration step. In a more succinct way, we write
\begin{align*}
\bW_{t+1} &= G(\bW_t,\bA),\\
G(\bW,\bA) &\define \Pi_{\Omega}( \bW - \alpha \nabla_{\bW}  L_{\log}(\bW,\bA)).
\end{align*}
Note that $G(W,A)$ denotes the projected gradient descent operator on $W$ for fixed $A$. In the finite sample regime, we define our loss $L_{\log}^{(n)}(\bW,\bA)$ as the finite sample counterpart of \prettyref{eq:logloss} by taking empirical expectations.
%\begin{align*}
% \define  - \sum_{i \in [n]} \frac{\log P_{y_i|\bx_i}}{n}.
%\end{align*}
Accordingly, we define the gradient operator $G_n(\bW,\bA)$ as
\begin{align*}
G_n(\bW,\bA) \define \Pi_{\Omega}( \bW - \alpha \nabla_{\bW}  L_{\log}^{(n)}(\bW,\bA)).
\end{align*}
In this paper, we analyze a sample-splitting version of the gradient descent, where given the number of samples $n$ and the iterations $T$, we first split the data into $T$ subsets of size $\lfloor n/T \rfloor$, and perform iterations on fresh batch of samples, \ie $\bW_{t+1}=G_{n/T}(\bW_t,\bA)$. We use the norm $\norm{W-W^\ast}= \max_{i \in [k-1]}\norm{w_i-w_i^\ast}_2$ for our theoretical results. The following theorem establishes the almost geometric convergence of the population-gradient iterates under some high SNR conditions. The following results are stated for $R=1$ for simplicity and also hold for any general $R>0$.
\begin{theorem}[GD convergence for classifiers]
\label{thm:twopopW}
Assume that $\max_{i \in [k]} \norm{a_i-a_i^\ast}_2 = \sigma^2 \varepsilon_{\mathrm{reg}} $. Then there exists two positive constants $\alpha_0$ and $\sigma_0$ such that for any step size $0<\alpha \leq \alpha_0$ and noise variance $\sigma^2<\sigma_0^2$, the population gradient descent iterates $\{\bW\}_{t \geq 0}$ converge almost geometrically to the true parameter $\bW^\ast$ for any randomly initialized $W_0 \in \Omega$, \ie
\begin{align*}
\norm{\bW_t-\bW^\ast} \leq \pth{\rho_\sigma}^t \norm{\bW_0 -\bW^\ast}+ \kappa \varepsilon_{\mathrm{reg}}\sum_{\tau=0}^{t-1}(\rho_\sigma)^\tau,
\end{align*}
where $(\rho_\sigma,\kappa) \in (0,1)\times (0,\infty)$ are dimension-independent constants depending on $g,k$ and $\sigma$ such that $\rho_\sigma=o_{\sigma}(1)$ and $\kappa = O_{k,\sigma}(1)$. 
%\leq  (k-1)\frac{\sqrt{6(2+\sigma^2)}}{2}$ for $g=$linear, sigmoid and ReLU.
\end{theorem}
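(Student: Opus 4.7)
The plan is to establish the one-step bound $\|W_{t+1}-W^*\| \le \rho_\sigma \|W_t-W^*\| + \kappa \varepsilon_{\mathrm{reg}}$ and iterate it. Decompose
\[
G(W_t, A) - W^* \;=\; [G(W_t, A) - G(W_t, A^*)] \;+\; [G(W_t, A^*) - W^*],
\]
so the argument splits into a self-consistency bound for the gradient iteration with the \emph{true} regressors, and a Lipschitz-in-$A$ bound that transfers the result to approximate regressors.

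For the self-consistency term, note that $L_{\log}(\cdot, A^*)$ is the population negative log-likelihood under the true model, so $\nabla_W L_{\log}(W^*, A^*) = 0$ and $W^* \in \Omega$; nonexpansiveness of $\Pi_\Omega$ then reduces the bound to showing that $W \mapsto W - \alpha\nabla L_{\log}(W, A^*)$ is a $\rho_\sigma$-contraction on $\Omega$ in the row-wise max norm. I would prove this via a ``perturbation of multiclass logistic regression'' argument: expanding the log-mixture density in powers of $\sigma^2$ by a Gauss--Hermite--type Taylor expansion gives
\[
L_{\log}(W, A^*) \;=\; -\Expect_{(x,z^*)}\!\log p^{W}_{z^*}(x) + O(\sigma^2) + \mathrm{const},
\]
where $z^*$ is the true expert label. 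The leading term is the standard multiclass softmax log-likelihood, strongly convex on the bounded set $\Omega$ with modulus $\mu_0 > 0$ depending on $R$, $k$ and the separation among $\{g(\inner{a_i^*}{x})\}$. The $O(\sigma^2)$ remainder is controlled in Hessian operator norm uniformly over $W \in \Omega$, using boundedness/Lipschitzness of $g$ (from the assumptions of \cite{makkuva2018breaking}) and sub-Gaussianity of $x$; this yields strong convexity with modulus $\mu_\sigma = \mu_0 - O(\sigma^2)$ and smoothness constant $L_\sigma$, and choosing $\alpha \le \alpha_0 \le 2/(\mu_\sigma + L_\sigma)$ gives $\rho_\sigma := \max(|1-\alpha\mu_\sigma|,|1-\alpha L_\sigma|) \in (0,1)$ whenever $\sigma \le \sigma_0$.

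For the regressor-perturbation term, differentiating $L_{\log}$ in $W$ produces a gradient whose $A$-dependence is carried entirely by the posterior responsibilities containing $\calN(y \mid g(\inner{a_i}{x}),\sigma^2)$. Perturbing $a_i$ by $\delta a_i$ changes the log of this Gaussian by $(y - g(\inner{a_i}{x}))\,g'(\inner{a_i}{x})\,x^\top \delta a_i / \sigma^2$, so a naive Lipschitz bound would diverge as $\sigma \to 0$; however the hypothesis $\|a_i - a_i^*\| = \sigma^2 \varepsilon_{\mathrm{reg}}$ exactly cancels the $1/\sigma^2$, giving
\[
\|\nabla L_{\log}(W,A) - \nabla L_{\log}(W,A^*)\| \;\le\; C(k,g,\sigma)\,\varepsilon_{\mathrm{reg}}
\]
uniformly in $W \in \Omega$. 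Nonexpansiveness of $\Pi_\Omega$ then yields $\|G(W,A) - G(W,A^*)\| \le \alpha C\,\varepsilon_{\mathrm{reg}} =: \kappa\,\varepsilon_{\mathrm{reg}}$ with $\kappa = O_{k,\sigma}(1)$. Summing the resulting linear recursion produces the claimed bound with the geometric sum $\sum_{\tau=0}^{t-1}\rho_\sigma^\tau$.

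The main obstacle is proving strong convexity of $L_{\log}(\cdot, A^*)$ \emph{globally on $\Omega$}, not merely locally near $W^*$: the theorem allows arbitrary initialization $W_0 \in \Omega$, so a neighborhood-only contraction is insufficient. This requires a uniform-in-$W$ Hessian bound on the $\sigma^2$-perturbation from multiclass logistic regression, and hinges on the latter being globally strongly convex on the bounded set $\Omega$, an identifiability-type condition that holds under the expert-separation assumptions inherited from~\cite{makkuva2018breaking}.
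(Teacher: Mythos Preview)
Your high-level decomposition and the perturbation-in-$A$ argument match the paper exactly: the paper also splits $G(W_t,A)-W^\ast$ into $[G(W_t,A)-G(W_t,A^\ast)]+[G(W_t,A^\ast)-W^\ast]$, packages the two pieces as separate lemmas (a contraction lemma and a robustness lemma), and for the robustness piece bounds $\|\nabla_{w_i}L_{\log}(W,A)-\nabla_{w_i}L_{\log}(W,A^\ast)\|$ exactly as you sketch, reducing it to \cite[Lemma~4]{makkuva2018breaking} where the $\sigma^2$ in the hypothesis cancels the $1/\sigma^2$ in the posterior sensitivity.

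The genuine difference is in how the contraction $\|G(W,A^\ast)-W^\ast\|\le\rho_\sigma\|W-W^\ast\|$ is obtained. You propose to prove it from scratch by showing that $L_{\log}(\cdot,A^\ast)$ is a small-$\sigma$ perturbation of the multiclass softmax log-likelihood, hence globally strongly convex on $\Omega$. The paper instead observes that the population gradient-descent update on $L_{\log}$ coincides \emph{exactly} with the gradient-EM update: writing out $\nabla_{w_i}L_{\log}(W,A)=-\Expect[(p^{(i)}_{W}(x,y)-p_i(x))x]$ and comparing with $\nabla_{w_i}Q(W\mid W_t)|_{W=W_t}$, one sees the two are equal, so the contraction follows immediately from the already-established gradient-EM contraction \cite[Lemma~3]{makkuva2018breaking}. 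Your route is more self-contained and makes the ``high-SNR $\Rightarrow$ near-logistic'' intuition explicit, but it requires you to actually carry out the uniform Hessian perturbation bound you flag as the main obstacle (and your ``Gauss--Hermite Taylor in $\sigma^2$'' is really a Laplace-type approximation whose error is governed by expert separation, not a clean $O(\sigma^2)$). The paper's route sidesteps all of that by reducing to a known result, at the cost of an extra dependency on prior work.
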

%\begin{remark}\normalfont
%Notice that the above conclusions formalize our earlier intuition that smaller the estimator error for the regressors, the better the estimation accuracy of the classifiers with respect to the true parameters. Moreover, it is quite interesting that with regressors kept fixed at $\bA \approx \bA^\ast$, the classifiers converge to the true parameters with any random initialization in $\Omega$. We did not try to optimize the explicit rate of convergence $\rho_\sigma$ and we believe it might be possible to obtain a tight characterization of this convergence.
%\end{remark}
\begin{proof}(Sketch)
For simplicity, let $\varepsilon_\mathrm{reg}=0$. Then we can show that $G(\bW^\ast,\bA^\ast)=\bW^\ast$ since $\nabla_{\bW}L_{\mathrm{\log}}(\bW=\bW^\ast,\bA^\ast)=0$. Then we capitalize on the fact that $G(\cdot,\bA^\ast)$ is strongly convex with minimizer at $\bW=\bW^\ast$ to show the geometric convergence rate. The more general case of $\varepsilon_\mathrm{reg}>0$ is handled through perturbation analysis. 
\end{proof}
We conclude our theoretical discussion on MoE by providing the following finite sample complexity guarantees for learning the classifiers using the gradient descent in the following theorem, which can be viewed as a finite sample version of \prettyref{thm:twopopW}.

\begin{theorem}[Finite sample complexity and convergence rates for GD]
\label{thm:twosampW}
In addition to the assumptions of \prettyref{thm:twopopW}, assume that the sample size $n$ is lower bounded as $n \geq c_1 T d \log(\frac T \delta)$. Then the sample-gradient iterates $\{ \bW^t \}_{t=1}^T$ based on $n/T$ samples per iteration satisfy the bound
\begin{align*}
\norm{\bW^t-\bW^\ast} &\leq (\rho_\sigma)^t \norm{\bW_0-\bW^\ast}\\
&\hspace{1em}+\frac{1}{1-\rho_\sigma}\pth{\kappa \varepsilon_{\mathrm{reg}}  + c_2 \sqrt{\frac{d T \log(T k/\delta)}{n}}}
\end{align*}
with probability at least $1-\delta$.
\end{theorem}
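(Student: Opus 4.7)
The plan is to combine the population contraction established in \prettyref{thm:twopopW} with a uniform concentration argument on the gradient operator, exploiting the sample-splitting scheme to keep each batch independent of the current iterate. At a high level, I will show that each iteration satisfies a one-step recursion of the form
\begin{align*}
\norm{\bW_{t+1}-\bW^\ast}\le \rho_\sigma\,\norm{\bW_t-\bW^\ast}+\kappa\,\varepsilon_{\mathrm{reg}}+\varepsilon_{\mathrm{stat}},
\end{align*}
with $\varepsilon_{\mathrm{stat}}=c_2\sqrt{dT\log(Tk/\delta)/n}$, and then unroll the recursion via the geometric sum $\sum_{\tau=0}^{t-1}\rho_\sigma^\tau\le 1/(1-\rho_\sigma)$ to obtain the stated bound.

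First, I would decompose the one-step error by triangle inequality as
\begin{align*}
\norm{G_{n/T}(\bW_t,\bA)-\bW^\ast}\le \norm{G(\bW_t,\bA)-\bW^\ast}+\norm{G_{n/T}(\bW_t,\bA)-G(\bW_t,\bA)}.
\end{align*}
The first term is exactly what \prettyref{thm:twopopW} controls (via one-step contraction and perturbation in $\bA$): it is at most $\rho_\sigma\norm{\bW_t-\bW^\ast}+\kappa\varepsilon_{\mathrm{reg}}$. Since $\Pi_\Omega$ is non-expansive and $\bW^\ast\in\Omega$, the projection can be absorbed on both sides. The second term is a pure statistical deviation and requires bounding $\alpha\,\norm{\nabla_{\bW}L_{\log}^{(n/T)}(\bW_t,\bA)-\nabla_{\bW}L_{\log}(\bW_t,\bA)}$.

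For the statistical term, the crucial point is that with sample splitting $\bW_t$ is independent of the $n/T$ fresh samples used at step $t$, so it suffices to bound $\sup_{\bW\in\Omega}\norm{\nabla_{\bW}L_{\log}^{(n/T)}(\bW,\bA)-\nabla_{\bW}L_{\log}(\bW,\bA)}$ with high probability and then apply it at $\bW=\bW_t$. I would carry out this uniform bound by (i) writing the per-sample gradient, which is a bounded softmax-weighted function of $\bx$ (each coordinate of the gradient has sub-Gaussian tails since $\bx\sim\calN(0,I_d)$ and the softmax factor is bounded in $[0,1]$), (ii) building an $\eta$-cover of $\Omega$ of size $(3R/\eta)^{(k-1)d}$, (iii) applying a standard vector Bernstein/Hoeffding inequality at each cover point to get deviation $\tilde O(\sqrt{d/(n/T)})$, and (iv) controlling the Lipschitz continuity of the gradient in $\bW$ to handle discretization. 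Taking $\eta$ polynomially small in $1/n$ and a union bound over the $T$ iterations and the $k-1$ rows yields
\begin{align*}
\sup_{\bW\in\Omega}\norm{\alpha\,\nabla_{\bW}L_{\log}^{(n/T)}(\bW,\bA)-\alpha\,\nabla_{\bW}L_{\log}(\bW,\bA)}\le c_2\sqrt{\frac{dT\log(Tk/\delta)}{n}}
\end{align*}
with probability at least $1-\delta$, provided $n\ge c_1 Td\log(T/\delta)$.

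Finally, I would iterate the recursion $e_{t+1}\le\rho_\sigma e_t+(\kappa\varepsilon_{\mathrm{reg}}+\varepsilon_{\mathrm{stat}})$ with $e_t=\norm{\bW_t-\bW^\ast}$ and sum the geometric series to recover the claimed expression. The main obstacle is the uniform concentration step: the log-likelihood gradient is a non-linear function of $\bW$ through the softmax, so establishing both sub-Gaussian per-point tails and a deterministic Lipschitz bound in $\bW$ over the entire ball $\Omega$ (uniformly in the activation $g$ and in the noise level $\sigma$) requires careful bookkeeping. The sample-splitting device is what makes this step tractable, since it decouples $\bW_t$ from the batch and reduces the problem to a one-shot uniform bound on a compact parameter set rather than a martingale argument along correlated iterates.
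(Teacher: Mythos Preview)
Your overall architecture---triangle-inequality decomposition, one-step recursion, geometric unrolling---matches the paper exactly, and the population piece is handled identically (the paper splits it further into $\norm{G(\bW_t,\bA)-G(\bW_t,\bA^\ast)}$ and $\norm{G(\bW_t,\bA^\ast)-\bW^\ast}$ via two separate lemmas, but this is cosmetic).

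The one substantive difference is how you treat the statistical deviation $\norm{G_{n/T}(\bW_t,\bA)-G(\bW_t,\bA)}$. You propose a \emph{uniform} bound over $\bW\in\Omega$ via an $\eta$-net of size $(3R/\eta)^{(k-1)d}$ plus Lipschitz continuity. The paper instead proves a \emph{pointwise} deviation lemma (for a single fixed $\bW$) and applies it directly at $\bW=\bW_t$, which is legitimate precisely because sample splitting makes $\bW_t$ independent of the fresh batch. This is the real payoff of sample splitting: it removes the need for any uniformity in $\bW$, so no covering, no Lipschitz-in-$\bW$ bookkeeping, and no extra $\log n$ or $kd$ factors from the net size creep into the final rate. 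Your remark that sample splitting ``reduces the problem to a one-shot uniform bound'' undersells it---if you were going to pay for a uniform bound anyway, you could reuse the same batch at every step and skip the splitting entirely. Your route would still yield a valid proof, but with looser constants and more work; the paper's pointwise argument is both shorter and tighter.
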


%
%Notice that above results are in a slightly different vein compared to our earlier results in \prettyref{sec:gradregressor} where we directly analyzed the landscape of $L(\cdot)$. This is because the analysis of log-likelihood surface for latent variable models is technically far more challenging and many of the basic problems for far simpler models such as Gaussian mixtures, mixtures of linear regressions, etc. still remain open. To the best of our knowledge, the only known results in this topic are recent works of \cite{Daskalakis17,hsulengthy}, where they analyzed the log-likelihood landscape for mixtures of two symmetric Gaussians assuming that the mixing probability $0.5$ is known. Extending these results to settings where the mixing probabilities are not known and allowing for more number of mixture component, and generalizing them to mixtures of regressions and finally, MoE problems is an independent and interesting topic of future research.

\begin{figure*}[t]
    \centering
    \begin{subfigure}[b]{0.32\textwidth}
        \includegraphics[width=\textwidth]	{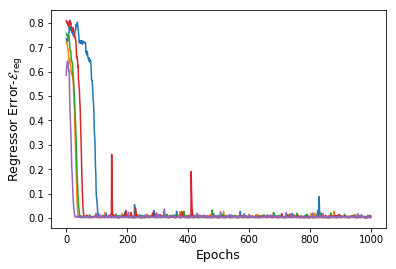}
		\caption{} \label{fig:1a}
	\end{subfigure}
    \begin{subfigure}[b]{0.32\textwidth}
        \includegraphics[width=\textwidth]	{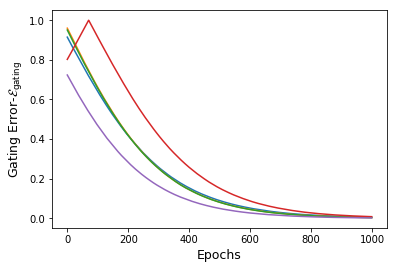}
		\caption{} \label{fig:1b}
	\end{subfigure}
	    \begin{subfigure}[b]{0.32\textwidth}
        \includegraphics[width=\textwidth]	{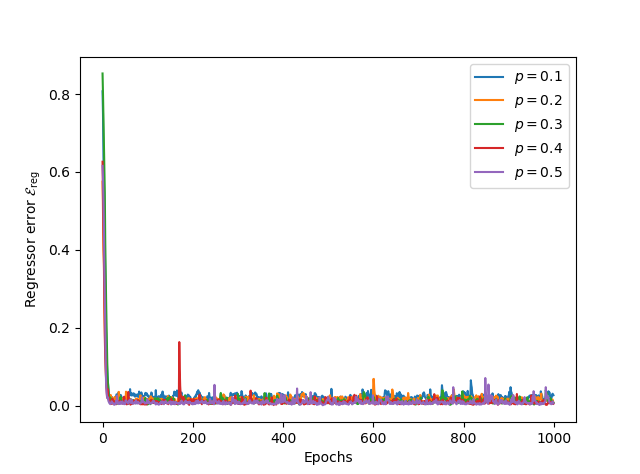}
		\caption{} \label{fig:1c}
	\end{subfigure}
\caption{(a), (b): Robustness to parameter orthogonality: Plots show performance over $5$ different trials for our losses $L_4$ and $L_{\mathrm{log}}$ respectively. (c) Robustness to Gaussianity of input: Performance over various mixing probabilities $p$. }
\label{fig:all}
\end{figure*}

% !TEX root = gated_moe.tex
\section{Experiments}
\label{sec:experiments}

In this section, we empirically validate the fact that running SGD on our novel loss functions $L_4$ and $L_\mathrm{\log}$ achieves superior performance compared to the existing approaches. Moreover, we empirically show that our algorithms are robust to the technical assumptions made in \prettyref{thm:landscape} and that they achieve equally good results even when the assumptions are relaxed.

{\bf Data generation.} For our experiments, we choose $d=10$, $k \in \{2,3\}$, $\ba_i^\ast=\be_i$ for $i \in [k]$ and $\bw_i^\ast=\be_{k+i}$ for $i \in [k-1]$, and $g=\Id$. We generate the data $\{(\bx_i,y_i)_{i=1}^n \}$ according to \prettyref{eq:kmoe} and using these ground-truth parameters. We chose $\sigma=0.05$ for all of our experiments.

{\bf Error metric.} If $\bA \in \reals^{k \times d}$ denotes the matrix of regressors where each row is of norm $1$, we use the error metric $\calE_{\mathrm{reg}}$ to gauge the closeness of $\bA$ to the ground-truth $\bA^\ast$:
\begin{align*}
\calE_{\mathrm{reg}} \define 1- \max_{\pi \in S_k} \min_{i \in [k]} |\inner{\ba_i}{\ba^\ast_{\pi(i)}}|,
\end{align*}
where $S_k$ denotes the set of all permutations on $[k]$. Note that $\calE_{\mathrm{reg}} \leq \varepsilon$ if and only if the learnt regressors have a minimum correlation of $1-\varepsilon$ with the ground-truth parameters, upto a permutation. The error metric $\calE_{\mathrm{gating}}$ is defined similarly.

{\bf Results.} In \prettyref{fig:regresslearn}, we choose $k=3$ and compare the performance of our algorithm against existing approaches. In particular, we consider three methods: 1) EM algorithm, 2) SGD on the the classical $\ell_2$-loss from \prettyref{eq:l2_loss}, and 3) SGD on our losses $L_4$ and $L_{\mathrm{log}}$. For all the methods, we ran $5$ independent trials and plotted the mean error. \prettyref{fig:all_regressor_error} highlights the fact that minimizing our loss function $L_4$ by SGD recovers the ground-truth regressors, whereas SGD on $\ell_2$-loss as well as EM get stuck in local optima. For learning the gating parameters $W$ using our approach, we first fix the regressors $A$ at the values learnt using $L_4$, \ie $A=\hat{A}$, where $\hat{A}$ is the converged solution for $L_4$. For $\ell_2$ and the EM algorithm, the gating parameters $W$ are learnt jointly with regressors $A$. \prettyref{fig:all_classifier_error} illustrates the phenomenon that our loss $L_{\mathrm{log}}$ for learning the gating parameters performs considerably better than the standard approaches, as indicated in significant gaps between the respective error values. Finally, in \prettyref{fig:our_loss_multiple_runs} we plot the regressor error for $L_4$ over $5$ random initializations. We can see that we recover the ground truth parameters in all the trials, thus empirically corroborating our technical results in \prettyref{sec:optim_landscape}.

%We can also see that SGD is able to escape the saddle points on $L_4$ during the training as highlighted by a plateau around $200$ epochs. We ran the SGD algorithm on both these loss functions with five different runs and plotted the mean error for both.
%%We can see that in all the five runs, SGD on our loss function converges to the ground truth parameters within just a few hundred iterations.
%For learning the gating parameters $W$, we ran SGD on $L_{\mathrm{\log}}$, while fixing the regressors $\bA$ at three different values: $A=\hat{A}$ learned by $L_4$, $A=\hat{A} $ learned by $\ell_2$, and $\bA=\bA^\ast$. From \prettyref{fig:regresslearn} we see that SGD on $L_{\mathrm{log}}$ recovers the true parameters and operates very close to the ground-truth. 

%\begin{figure}
%\includegraphics[width=0.5\textwidth]{}
%\hfill
%\includegraphics[width=0.5\textwidth]{}
%\caption{Learning gating parameters using $L_{\mathrm{\log}}$. Left: Comparison of $\ell_2$ and $L_4$ with ground truth. Right: $5$-different runs of SGD on $L_{\mathrm{\log}}$ using the regressors $\bA$ from $L_4$. }
%\label{fig:classlearn}
%\end{figure}

\subsection{Robustness to technical assumptions}
In this section, we verify numerically the fact that our algorithms work equally well in the absence of technical assumptions made in \prettyref{sec:optim_landscape}.

\textbf{Relaxing orthogonality in \prettyref{thm:landscape}.} A key assumption in proving \prettyref{thm:landscape}, adapted from \cite{makkuva2018breaking}, is that the set of regressors $\{a_i^\ast\}$ and set of gating parameters $\{w_i^\ast\}$ are orthogonal to each other. While this assumption is needed for the technical proofs, we now empirically verify that our conclusions still hold when we relax this. For this experiment, we choose $k=2$ and let $(a_1^\ast, a_2^\ast)=(e_1, e_2)$. For the gating parameter $w^\ast \define w_1^\ast$, we randomly generate it from uniform distribution on the $d$-dimensional unit sphere. In \prettyref{fig:1a} and \prettyref{fig:1b}, we plotted the individual parameter estimation error for $5$ different runs for both of our losses $L_4$ and $L_{\mathrm{log}}$ for learning the regressors and the gating parameter respectively. We can see that our algorithms are still able to learn the true parameters even when the orthogonality assumption is relaxed. 

\textbf{Relaxing Gaussianity of the input.} To demonstrate the robustness of our approach to the assumption that the input $x$ is standard Gaussian, \ie $x \sim \calN(0,I_d)$, we generated $x$ according to a mixture of two symmetric Gaussians each with identity covariances, \ie $x \sim p \calN(\mu, I_d)+(1-p) \calN(-\mu,I_d)$, where  $p \in [0,1]$ is the mixing probability and $\mu \in \reals^d$ is a fixed but randomly chosen vector. For various mixing proportions $p \in \{0.1, 0.2, 0.3, 0.4, 0.5\}$, we ran SGD on our loss $L_4$ to learn the regressors.  \prettyref{fig:1c} highlights that we learn these ground truth parameters in all the settings. 

Finally we note that in all our experiments, the loss $L_4$ seems to require a  larger batch size ($1024$) for its gradient estimation while running SGD. However, with smaller batch sizes such as $128$ we are still able to achieve similar performance but with more variance. (see \prettyref{app:additionalexp}).

%\begin{figure*}
%\includegraphics[width=\textwidth]{}
%\includegraphics[width=\textwidth]{}
%
%\caption{Top: Parameter error for regressors using $L_4(\cdot)$. Bottom: Parameter error for gating parameters using $L_{\mathrm{\log}}$}
%\label{fig:non_orthogonal}
%\end{figure*} 

%We also found that similar conclusions hold when we directly compare the performance of $L_{\mathrm{\log}}$ and that of $\ell_2$ for learning $W$.
%Finally we note that the loss $L_4$ seems to require larger batch size ($1024$) for its gradient estimation. However, with smaller batch sizes such as $128$ we are still able to achieve similar performance but with more variance. (see \prettyref{app:additionalexp}).

%in addition to the comparison of our loss vs $\ell_2$, we also compared them against that of the ground-truth. First, using the regressors learnt by $\ell_2$-loss, $L_{4}$ and the ground truth $\bA^\ast$, we ran SGD on $L_{\mathrm{\log}}$ in these three different settings.
%Similarly, we verify that gradient descent on the empirical log-loss yields the ground-truth gating parameters; however, SGD on $\ell_2$-loss is prone to spurious minima. 

% !TEX root = gated_moe.tex
\section{Discussion}
\label{sec:disc}

In this paper we established the first mathematical connection between two popular gated neural networks: GRU and MoE. Inspired by this connection and the success of SGD based algorithms in finding global minima in a variety of non-convex problems in deep learning, we provided the first gradient descent based approach for learning the parameters in a MoE. While the canoncial MoE does not involve any time series, extension of our methods for the recurrent setting is an important future direction. Similarly, extensions to deep MoE comprised of multiple gated as well as non-gated layers is also a fruitful direction of further research. We believe that the theme of using different loss functions for distinct parameters in NN models can potentially enlighten some new theoretical insights as well as practical methodologies for complex neural models.

\subsubsection*{Acknowledgements}
We would like to thank the anonymous reviewers for their suggestions. This work is supported by NSF grants 1927712 and 1929955.

\bibliographystyle{apalike}
\bibliography{references}

\begin{thebibliography}{}

\bibitem[Alaeddini et~al., 2018]{alaeddini2018linear}
Alaeddini, A., Alemzadeh, S., Mesbahit, A., and Mesbahi, M. (2018).
\newblock Linear model regression on time-series data: Non-asymptotic error
  bounds and applications.
\newblock In {\em 2018 IEEE Conference on Decision and Control (CDC)}, pages
  2259--2264. IEEE.

\bibitem[Allen-Zhu and Li, 2019]{allen2019can}
Allen-Zhu, Z. and Li, Y. (2019).
\newblock Can sgd learn recurrent neural networks with provable generalization?
\newblock {\em arXiv preprint arXiv:1902.01028}.

\bibitem[Allen-Zhu et~al., 2018]{allen2018convergence}
Allen-Zhu, Z., Li, Y., and Song, Z. (2018).
\newblock On the convergence rate of training recurrent neural networks.
\newblock {\em arXiv preprint arXiv:1810.12065}.

\bibitem[Arora et~al., 2018]{arora2018towards}
Arora, S., Hazan, E., Lee, H., Singh, K., Zhang, C., and Zhang, Y. (2018).
\newblock Towards provable control for unknown linear dynamical systems.

\bibitem[Balakrishnan et~al., 2017]{bala17}
Balakrishnan, S., Wainwright, M.~J., and Yu, B. (2017).
\newblock Statistical guarantees for the {EM} algorithm: From population to
  sample-based analysis.
\newblock {\em The Annals of Statistics}, 45(1):77--120.

\bibitem[Bandeira et~al., 2016]{bandeira}
Bandeira, A.~S., Boumal, N., and Voroninski, V. (2016).
\newblock On the low-rank approach for semidefinite programs arising in
  synchronization and community detection.
\newblock {\em arXiv preprint arXiv:1602.04426}.

\bibitem[Bhojanapalli et~al., 2016]{bhoja}
Bhojanapalli, S., Neyshabur, B., and Srebro, N. (2016).
\newblock Global optimality of local search for low rank matrix recovery.
\newblock {\em arXiv preprint arXiv:1605.07221}.

\bibitem[Cho et~al., 2014]{gru}
Cho, K., van Merrienboer, B., Bahdanau, D., and Bengio, Y. (2014).
\newblock On the properties of neural machine translation: Encoder-decoder
  approaches.
\newblock abs/1409.1259.

\bibitem[Collobert et~al., 2002]{svmmoe}
Collobert, R., Bengio, S., and Bengio, Y. (2002).
\newblock A parallel mixture of {SVM}s for very large scale problems.
\newblock {\em Neural Computing}.

\bibitem[Dean et~al., 2017]{dean2017sample}
Dean, S., Mania, H., Matni, N., Recht, B., and Tu, S. (2017).
\newblock On the sample complexity of the linear quadratic regulator.
\newblock {\em arXiv preprint arXiv:1710.01688}.

\bibitem[Dean et~al., 2018]{dean2018safely}
Dean, S., Tu, S., Matni, N., and Recht, B. (2018).
\newblock Safely learning to control the constrained linear quadratic
  regulator.
\newblock {\em arXiv preprint arXiv:1809.10121}.

\bibitem[Eigen et~al., 2014]{deepmoe}
Eigen, D., Ranzato, M., and Sutskever, I. (2014).
\newblock Learning factored representations in a deep mixture of experts.
\newblock {\em arXiv preprint arXiv:1312.4314}.

\bibitem[Gao et~al., 2019]{gao2018learning}
Gao, W., Makkuva, A.~V., Oh, S., and Viswanath, P. (2019).
\newblock Learning one-hidden-layer neural networks under general input
  distributions.
\newblock In Chaudhuri, K. and Sugiyama, M., editors, {\em Proceedings of
  Machine Learning Research}, volume~89 of {\em Proceedings of Machine Learning
  Research}, pages 1950--1959. PMLR.

\bibitem[Ge et~al., 2015]{escapesaddle}
Ge, R., Huang, F., Jin, C., and Yuan, Y. (2015).
\newblock Escaping from saddle points --- online stochastic gradient for tensor
  decomposition.
\newblock In {\em Proceedings of The 28th Conference on Learning Theory},
  volume~40 of {\em Proceedings of Machine Learning Research}, pages 797--842,
  Paris, France. PMLR.

\bibitem[Ge et~al., 2018]{landscapedesign}
Ge, R., Lee, J.~D., and Ma, T. (2018).
\newblock Learning one-hidden-layer neural networks with landscape design.
\newblock {\em International Conference on Learning Representations}.

\bibitem[Grad, 1949]{grad1949note}
Grad, H. (1949).
\newblock Note on n-dimensional hermite polynomials.
\newblock {\em Communications on Pure and Applied Mathematics}, 2(4):325--330.

\bibitem[Graves, 2013]{handwritten}
Graves, A. (2013).
\newblock Generating sequences with recurrent neural networks.
\newblock {\em arXiv preprint arXiv:1308.0850}.

\bibitem[Graves et~al., 2013]{speechrec}
Graves, A., rahman Mohamed, A., and Hinton, G. (2013).
\newblock Speech recognition with deep recurrent neural networks.
\newblock {\em arXiv preprint arXiv:1303.5778}.

\bibitem[Gregor et~al., 2015]{imagegen}
Gregor, K., Danihelka, I., Graves, A., Rezende, D.~J., and Wierstra, D. (2015).
\newblock Draw: A recurrent neural network for image generation.
\newblock {\em arXiv preprint arXiv:1502.04623}.

\bibitem[Gross et~al., 2017]{gross2017hard}
Gross, S., Ranzato, M., and Szlam, A. (2017).
\newblock Hard mixtures of experts for large scale weakly supervised vision.
\newblock In {\em Proceedings of the IEEE Conference on Computer Vision and
  Pattern Recognition}, pages 6865--6873.

\bibitem[Hardt and Ma, 2017]{identitymatters}
Hardt, M. and Ma, T. (2017).
\newblock Identity matters in deep learning.
\newblock {\em arXiv preprint arXiv:1611.04231}.

\bibitem[Hardt et~al., 2018]{hardt2018gradient}
Hardt, M., Ma, T., and Recht, B. (2018).
\newblock Gradient descent learns linear dynamical systems.
\newblock {\em The Journal of Machine Learning Research}, 19(1):1025--1068.

\bibitem[Ho et~al., 2019]{ho2019convergence}
Ho, N., Yang, C.-Y., and Jordan, M.~I. (2019).
\newblock Convergence rates for gaussian mixtures of experts.
\newblock {\em arXiv preprint arXiv:1907.04377}.

\bibitem[Hochreiter and Schmidhuber, 1997]{hochreiter1997long}
Hochreiter, S. and Schmidhuber, J. (1997).
\newblock Long short-term memory.
\newblock {\em Neural computation}, 9(8):1735--1780.

\bibitem[Holmquist, 1996]{holmquist1996d}
Holmquist, B. (1996).
\newblock The d-variate vector hermite polynomial of order k.
\newblock {\em Linear algebra and its applications}, 237:155--190.

\bibitem[Jacobs et~al., 1991]{JacJor}
Jacobs, R.~A., Jordan, M.~I., Nowlan, S.~J., and Hinton, G.~E. (1991).
\newblock Adaptive mixtures of local experts.
\newblock {\em Neural Computation}.

\bibitem[Janzamin et~al., 2014]{scorebusiness}
Janzamin, M., Sedghi, H., and Anandkumar, A. (2014).
\newblock Score function features for discriminative learning: Matrix and
  tensor framework.
\newblock abs/1412.2863.

\bibitem[Jordan and Jacobs, 1994]{jor94}
Jordan, M.~I. and Jacobs, R.~A. (1994).
\newblock Hierarchical mixtures of experts and the {EM} algorithm.
\newblock {\em Neural Comput.}, 6(2):181--214.

\bibitem[Jordan and Xu, 1995]{emmoeanalysis}
Jordan, M.~I. and Xu, L. (1995).
\newblock Convergence results for the {EM} approach to mixtures of experts
  architectures.
\newblock {\em Neural Networks}, 8(9):1409--1431.

\bibitem[Kawaguchi, 2016]{kawa}
Kawaguchi, K. (2016).
\newblock Deep learning without poor local minima.
\newblock {\em arXiv preprint arXiv:1605.07110}.

\bibitem[Ledoux and Talagrand, 1991]{LedouxTal91}
Ledoux, M. and Talagrand, M. (1991).
\newblock {\em {Probability in Banach Spaces: isoperimetry and processes}}.
\newblock Springer, Berlin.

\bibitem[Li and Yuan, 2017]{reluanalysis}
Li, Y. and Yuan, Y. (2017).
\newblock Convergence analysis of two-layer neural networks with relu
  activation.
\newblock {\em arXiv preprint arXiv:1705.09886}.

\bibitem[Li et~al., 2018]{li2018towards}
Li, Z., He, D., Tian, F., Chen, W., Qin, T., Wang, L., and Liu, T.-Y. (2018).
\newblock Towards binary-valued gates for robust lstm training.
\newblock {\em arXiv preprint arXiv:1806.02988}.

\bibitem[Livni et~al., 2014]{livni2014computational}
Livni, R., Shalev-Shwartz, S., and Shamir, O. (2014).
\newblock On the computational efficiency of training neural networks.
\newblock In {\em Advances in neural information processing systems}, pages
  855--863.

\bibitem[Makkuva et~al., 2019]{makkuva2018breaking}
Makkuva, A.~V., Oh, S., Kannan, S., and Viswanath, P. (2019).
\newblock Breaking the gridlock in mixture-of-experts: Consistent and efficient
  algorithms.
\newblock {\em International Conference on Machine Learning (ICML 2019), arXiv
  preprint arXiv:1802.07417}.

\bibitem[Marecek and Tchrakian, 2018]{marecek2018robust}
Marecek, J. and Tchrakian, T. (2018).
\newblock Robust spectral filtering and anomaly detection.
\newblock {\em arXiv preprint arXiv:1808.01181}.

\bibitem[Masoudnia and Ebrahimpour, 2014]{MaEb14}
Masoudnia, S. and Ebrahimpour, R. (2014).
\newblock Mixture of experts: a literature survey.
\newblock {\em Artificial Intelligence Review}, 42(2):275.

\bibitem[Ng and Deisenroth, 2014]{hmegp}
Ng, J.~W. and Deisenroth, M.~P. (2014).
\newblock Hierarchical mixture-of-experts model for large-scale gaussian
  process regression.
\newblock {\em arXiv preprint arXiv:1412.3078}.

\bibitem[Oymak and Ozay, 2018]{oymak2018non}
Oymak, S. and Ozay, N. (2018).
\newblock Non-asymptotic identification of lti systems from a single
  trajectory.
\newblock {\em arXiv preprint arXiv:1806.05722}.

\bibitem[Panigrahy et~al., 2017]{electron_proton}
Panigrahy, R., Rahimi, A., Sachdeva, S., and Zhang, Q. (2017).
\newblock Convergence results for neural networks via electrodynamics.
\newblock {\em arXiv preprint arXiv:1702.00458}.

\bibitem[Rasmussen and Ghahramani, 2002]{rasmussen2002infinite}
Rasmussen, C.~E. and Ghahramani, Z. (2002).
\newblock Infinite mixtures of gaussian process experts.
\newblock In {\em Advances in neural information processing systems}, pages
  881--888.

\bibitem[Sedghi et~al., 2014]{SedghiA14a}
Sedghi, H., Janzamin, M., and Anandkumar, A. (2014).
\newblock Provable tensor methods for learning mixtures of classifiers.
\newblock {\em arXiv preprint arXiv:1412.3046}.

\bibitem[Shazeer et~al., 2017]{SMMD+17}
Shazeer, N., Mirhoseini, A., Maziarz, K., Davis, A., Le, Q., Hinton, G., and
  Dean, J. (2017).
\newblock Outrageously large neural networks: The sparsely-gated
  mixture-of-experts layer.
\newblock {\em arXiv preprint arXiv:1701.06538}.

\bibitem[Simchowitz et~al., 2018]{simchowitz2018learning}
Simchowitz, M., Mania, H., Tu, S., Jordan, M.~I., and Recht, B. (2018).
\newblock Learning without mixing: Towards a sharp analysis of linear system
  identification.
\newblock {\em arXiv preprint arXiv:1802.08334}.

\bibitem[Stein, 1972]{Ste72}
Stein, C. (1972).
\newblock A bound for the error in the normal approximation to the distribution
  of a sum of dependent random variables.
\newblock In {\em Proceedings of the Sixth Berkeley Symposium on Mathematical
  Statistics and Probability}, volume~2, pages 583--602. University of
  California Press.

\bibitem[Sutskever et~al., 2014]{machinetranslation}
Sutskever, I., Vinyals, O., and Le, Q.~V. (2014).
\newblock Sequence to sequence learning with neural networks.
\newblock In {\em Proceedings of the 27th International Conference on Neural
  Information Processing Systems - Volume 2}, NIPS'14.

\bibitem[Tresp, 2001]{gpmoe}
Tresp, V. (2001).
\newblock Mixtures of gaussian processes.
\newblock NIPS.

\bibitem[Vaart and Wellner, 1996]{vaart1996weak}
Vaart, A.~W. and Wellner, J.~A. (1996).
\newblock {\em Weak convergence and empirical processes: with applications to
  statistics}.
\newblock Springer.

\bibitem[Vinyals et~al., 2014]{imagecaption}
Vinyals, O., Toshev, A., Bengio, S., and Erhan, D. (2014).
\newblock Show and tell: A neural image caption generator.
\newblock {\em arXiv preprint arXiv:1411.4555}.

\bibitem[Yuksel et~al., 2012]{YWG12}
Yuksel, S.~E., Wilson, J.~N., and Gader, P.~D. (2012).
\newblock Twenty years of mixture of experts.
\newblock {\em IEEE Transactions on Neural Networks and Learning Systems},
  23(8):1177--1193.

\bibitem[Zhong et~al., 2017]{dhillon}
Zhong, K., Song, Z., Jain, P., Bartlett, P.~L., and Dhillon, I.~S. (2017).
\newblock Recovery guarantees for one-hidden-layer neural networks.
\newblock {\em arXiv preprint arXiv:1706.03175}.

\end{thebibliography}

\appendix
\onecolumn
% !TEX root = gated_moe.tex
\section{Connection between $k$-MoE and other popular models}
\label{app:connection}

\paragraph{Relation to other mixture models.}
Notice that if let $\bw_i^\ast=0$ in \prettyref{eq:kmoe} for all $i \in [k]$, we recover the well-known uniform mixtures of \textit{generalized linear models (GLMs)}. Similarly, allowing for bias parameters in \prettyref{eq:kmoe}, we can recover the generic mixtures of GLMs. Moreover, if we let $g$ to be the linear function, we get the popular \textit{mixtures of linear regressions} model. These observations highlight that MoE models are a far more stricter generalization of mixtures of GLMs since they allow the mixing probability $p_i^\ast(\bx)$ to depend on each input $\bx$ in a parametric way. This makes the learning of the parameters far more challenging since the gating and expert parameters are inherently coupled.

\paragraph{Relation to feed-forward neural networks.} Note that if we let $\bw_i^\ast=0$ and allow for bias parameters in the soft-max probabilities in \prettyref{eq:kmoe}, taking conditional expectation on both sides yields
\begin{align}
\hat{y}(\bx) \define \Expect[y|\bx] = \sum_{i \in [k]} w_i^\ast g(\inner{\ba_i^\ast}{\bx}), \quad \sum_i w_i^\ast =1, w_i^\ast \in [0,1].
\label{eq:nn}
\end{align}
Thus the mapping $\bx \mapsto \hat{y}(\bx)$ is exactly the same as that of a $1$-hidden -layer neural network with activation function $g$ if we restrict the output layer to positive weights. Thus $k$-MoE can also be viewed as a probabilistic model for \textit{gated feed-forward networks}.

\section{Valid class of non-linearities}
\label{app:non_linearclass}
We slightly modify the class of non-linearities from \cite{makkuva2018breaking} for our theoretical results. The only key modification is that we use a fourth-order derivative based conditions, as opposed to third-order derivatives used in the above work. Following their notation, let $Z \sim \calN(0,1)$ and $Y|Z \sim \calN(g(Z),\sigma^2)$, where $g: \reals \to \reals$. For $(\alpha, \beta, \gamma,\delta ) \in \reals^4$, define
\begin{align*}
\calQ_4(y) \triangleq Y^4+\alpha Y^3+\beta Y^2+ \gamma Y,
\end{align*}
where
\begin{align*}
\calS_4(Z) \define \Expect[ \calQ_4(y)|Z]=g(Z)^4+6g(Z)^2 \sigma^2+\sigma^4+\alpha(g(Z)^3+3g(Z)\sigma^2) + \beta(g(Z)^2+\sigma^2)+\gamma g(Z).
\end{align*}
Similarly, define
\begin{align*}
\calQ_2(y) &\triangleq Y^2+\delta Y, \quad  \calS_2(Z)= \Expect[ \calQ_2(y)|Z]=g(Z)^2+ \delta g(Z)+\sigma^2.
\end{align*}
\begin{condition}\normalfont
\label{cond:cond1}$\Expect[\calS_4'(Z)]=\Expect[\calS_4''(Z)]=\Expect[\calS_4'''(Z)]=0$ and $\Expect[\calS_4''''(Z)] \neq 0$. Or equivalently, in view of Stein's lemma \cite{Ste72},
\begin{align*}
\Expect[\calS_4(Z) Z]=\Expect[\calS_4(Z)(Z^2-1)]=\Expect[\calS_4(Z)(Z^3-3Z)]=0, \text { and } \Expect[\calS_4(Z)(Z^4-6Z^2+3)] \neq 0.
\end{align*}
\end{condition}
\begin{condition}\normalfont
\label{cond:cond2}$\Expect[\calS_2'(Z)]=0$ and $\Expect[\calS_2''(Z)] \neq 0$. Or equivalently,
\begin{align*}
\Expect[\calS_2(Z) Z]=0 \text{ and } \Expect[\calS_2(Z) (Z^2-1)] \neq 0.
\end{align*}
\end{condition}

\begin{definition}
We say that the non-linearity $g$ is $(\alpha,\beta,\gamma,\delta)-\valid $ if there exists a tuple $(\alpha,\beta,\gamma, \delta) \in \reals^4$ such that both  \prettyref{cond:cond1} and \prettyref{cond:cond2} are satisfied.
\end{definition}

%\begin{remark}\normalfont
%An alternative way to view \prettyref{cond:cond1} is that it demands the system of linear equations determined by $\Expect[\calS_4'(Z)]=\Expect[\calS_4''(Z)]=\Expect[\calS_4'''(Z)]=0$ to be consistent and their solution $(\alpha,\beta,\gamma,\delta)$ to satisfy $\Expect[\calS_4''''(Z)] \neq 0$. Similarly, $\Expect[\calS_2'(Z)]=0$ implies that
%\begin{align*}
%\delta = \frac{-2 \Expect[g(Z)g'(Z)]}{\Expect[g'(Z)]},
%\end{align*}
%and \prettyref{cond:cond2} stipulates that $\Expect[\calS_2''(Z)] \neq 0$ with this choice of $\delta$.
%\end{remark}

While these conditions might seem restrictive at first, all the widely used non-linearities such as $\Id$, ReLU, leaky-ReLU, sigmoid, etc. belong to this. For some of these non-linear activations, we provide the pre-computed transformations below:
%It turns out that these conditions hold for a wide class of non-linearities and in particular, when $g$ is either the identity function, or the ReLU. For these non-linearities, the values of the tuple $(\alpha,\beta,\gamma,\delta)$ are  provided below (which are obtained by solving the linear equations mentioned above).

\begin{example}
If $g=\Id$, then $\calS_3(y)=y^4-6y^2(1+\sigma^2)$ and $\calQ_2(y)=y^2$.
\end{example}
%\begin{example}
%If $g$ is the sigmoid function, \ie $g(z)=\frac{1}{1+e^{-z}}$, then $\alpha$ and $\beta$ can be obtained by solving the following linear equation:
%\begin{align*}
%\begin{bmatrix}
%0.2066 & 0.2066 \\
%0.0624 & -0.0001
%\end{bmatrix} 
%\begin{bmatrix}
%\alpha \\ \beta
%\end{bmatrix}= \begin{bmatrix}
%-0.1755-0.6199 \sigma^2 \\ -0.0936
%\end{bmatrix}
%\end{align*}
%The second-order transformation is given by $\calQ_2(y)=y^2-y$ (since $\gamma=-1$ when $g$ is sigmoid).
%\end{example}
\begin{example}
If $g=$ ReLU, \ie $g(z)=\mathrm{max}\{0,z\}$, we have that for any $p,q \in \naturals$,
\begin{align*}
\Expect[g(Z)^p Z^q] = \int_0^\infty z^{p+q} \pth{\frac{1}{\sqrt{2\pi}} e^{-z^2/2}} dz =\frac{1}{2}\Expect[|Z|^{p+q}]=\frac{(p+q-1)!!}{2}\begin{cases}
 \sqrt{\frac{2}{\pi}} & \text{if } p+q \text{ is odd} \\ 1 & \text{if } p+q \text{ is even}
\end{cases}.
\end{align*}
\end{example}
Substituting these moments in the linear set of equations $\Expect[\calS_4(Z) Z]=\Expect[\calS_4(Z)(Z^2-1)]=\Expect[\calS_4(Z)(Z^3-3Z)]=0$, we obtain
\begin{align*}
\begin{bmatrix}
1.5+1.5\sigma^2 & \sqrt{\frac{2}{\pi}}+\sigma^2 & 0.5 \\
3\sqrt{\frac{2}{\pi}}(1+\sigma^2/2) & 1+\sigma^2 & \frac{1}{2}\sqrt{\frac{2}{\pi}} \\
3 & \sqrt{\frac{2}{\pi}}+\sigma^2 & 0 
\end{bmatrix}\begin{bmatrix}
\alpha \\ \beta \\ \gamma
\end{bmatrix}= - \begin{bmatrix}
\sqrt{\frac{2}{\pi}}(4+6\sigma^2)) \\ 6+6\sigma^2 \\ \sqrt{\frac{2}{\pi}}(12+6\sigma^2)
\end{bmatrix}.
\end{align*}
Solving for $(\alpha,\beta,\gamma)$ will yield $\calS_4(Z)$. Finally, we have that $\delta=-2\sqrt{\frac{2}{\pi}}$.

\section{Proofs of \prettyref{sec:L_4}}
\label{app:proofregressor}

\begin{remark}\normalfont
To choose the parameters in \prettyref{thm:landscape}, we follow the parameter choices from \cite{landscapedesign}. Let $c$ be a sufficiently small universal constant (\eg $c=0.01$). Assume $\mu \leq c/\kappa^\ast$, and $\lambda \geq 1/(ca_{\min}^\ast)$. Let $\tau_0=c \min \sth{\mu/(\kappa d a_{\max}^\ast), \lambda}\sigma_{\min}(\bM)$. Let $\delta \leq \min \sth{\frac{c \varepsilon_0}{a_{\max}^\ast \cdot m \sqrt{d} \kappa^{1/2}(\bM)}, \tau_0/2 }$ and $\varepsilon=\min \sth{\lambda \sigma_{\min}(\bM)^{1/2}, c\delta/\sqrt{\norm{\bM}}, c\varepsilon_0\delta\sigma_{\min}(\bM) }$.
\end{remark}
%
%Once we have the analytical expression for the population risk $L(\cdot)$, we can utilize the analysis from \cite{landscapedesign} to provide landscape guarantees for $L(\cdot)$ as highlighted in the next theorem. 

For any $k \times d$ matrix $\bA$, let $\bA^\dagger$ be its pseudo inverse such that $\bA \bA^\dagger= \bI_{k \times k}$ and $\bA^\dagger \bA$ is the projection matrix to the row span of $\bA$. Let $\alpha_i^\ast \define \Expect[p_i^\ast(\bx)], a_i^\ast=\frac{1}{\alpha_i^\ast}$ and $\kappa^\ast=\frac{\alpha_{\max}^\ast}{\alpha_{\min}^\ast}$. Let $\bM=\sum_{i \in [k]} \alpha_i^\ast \ba_i^\ast (\ba_i^\ast)^\top$, $\kappa(\bM)= \frac{\norm{\bM}}{\sigma_{\min}(\bM)}$.

For the sake of clarity, we now formally state our main assumptions, adapted from \cite{makkuva2018breaking}:

\begin{enumerate}
\item $\bx$ follows a standard Gaussian distribution, \ie $\bx \sim \calN(0,\bI_d)$.
\item $\norm{\ba_i^\ast}=1$ for all $i \in [k]$ and $\norm{\bw_i^\ast} \leq R$ for all $i \in [k-1]$.
\item The regressors $\ba_1^\ast,\ldots,\ba_k^\ast$ are linearly independent and the classifiers $\{\bw_i^\ast\}_{i \in [k-1]}$ are orthogonal to the span $\calS=\mathrm{span}\sth{\ba_1^\ast,\ldots,\ba_k^\ast}$, and $2k-1 < d$.
\item The non-linearity $g: \reals \to \reals$ is $(\alpha,\beta,\gamma,\delta)-\valid$, which we define in \prettyref{app:non_linearclass}.
\end{enumerate}

Note that while the first three assumptions are same as that of \cite{makkuva2018breaking}, the fourth assumption is slightly different from theirs. Under this assumptions, we first give an alternative characterization of $L_4(\cdot)$ in the following theorem which would be crucial for the proof of \prettyref{thm:landscape}.
\begin{theorem}
\label{thm:losslhs}
The function $L(\cdot)$ defined in \prettyref{eq:tensorloss} satisfies that
\begin{align*}
L_4(\bA) = \sum_{m \in [k]} \Expect[p_m^\ast(\bx)] \sum_{\substack{i \neq j \\ i,j \in [k]}} \inner{\ba_m^\ast}{\ba_i}^2 \inner{\ba_m^\ast}{\ba_j}^2 - \mu \sum_{m,i \in [k]} \Expect[p_m^\ast(\bx)] \inner{\ba_m^\ast}{\ba_i}^4 \\
+ \lambda \sum_{i \in [k]} (\sum_{m \in [k]} \Expect[p_m^\ast(\bx)] \inner{\ba^\ast_m}{\ba_i}^2-1)^2  +  \frac{\delta}{2} \norm{\bA}_F^2 
\end{align*}
\end{theorem}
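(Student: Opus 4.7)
The plan is to compute each of the three expectation terms in the definition of $L_4$ (those with integrands $\calQ_4 t_1$, $-\mu\,\calQ_4 t_2$, and $\lambda\,(\calQ_2 t_3 - 1)^2$) separately, and verify that each collapses to the stated polynomial in the inner products $\inner{\ba_m^\ast}{\ba_i}$.

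First I would use tower expectation to integrate out the hidden expert-indicator $z$ and the Gaussian noise $\xi$ from \prettyref{eq:kmoe}, obtaining $\Expect[\calQ_4(y)\mid\bx] = \sum_m p_m^\ast(\bx)\,\calS_4(\inner{\ba_m^\ast}{\bx})$ and analogously $\Expect[\calQ_2(y)\mid\bx] = \sum_m p_m^\ast(\bx)\,\calS_2(\inner{\ba_m^\ast}{\bx})$, where $\calS_4,\calS_2$ are the conditional-mean polynomials of \prettyref{app:non_linearclass}. Next I would identify, by direct inspection, that up to the constants $c_{g,\sigma}$ and $c'_{g,\sigma}$, $t_2(\ba_i,\bx)$ is the univariate fourth-degree Hermite polynomial $H_4(\ba_i,\bx)$, $t_3(\ba_i,\bx)$ is the second-degree Hermite $H_2(\ba_i,\bx)$, and $t_1(\ba_i,\ba_j,\bx)$ is the mixed $(2,2)$ Hermite tensor $H_{2,2}(\ba_i,\ba_j,\bx)$. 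Iterated Gaussian integration by parts (Stein's lemma) then gives, for any smooth $h$, the identities $\Expect[h(\bx)H_k(\bu,\bx)] = \Expect[(\bu^\top\nabla)^k h(\bx)]$ and $\Expect[h(\bx)H_{2,2}(\bu,\bv,\bx)] = \Expect[(\bu^\top\nabla)^2(\bv^\top\nabla)^2 h(\bx)]$.

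The structural lever is the orthogonality assumption $\bw_j^\ast\perp\mathrm{span}\{\ba_l^\ast\}$: it makes $(\inner{\bw_j^\ast}{\bx})_j$ and $(\inner{\ba_l^\ast}{\bx})_l$ independent Gaussian vectors, so inside $p_m^\ast(\bx)\,\calS_4(\inner{\ba_m^\ast}{\bx})$ the two factors depend on independent coordinate subspaces and expectations factor over them. Applying this factorization together with the Leibniz product rule to $(\ba_i^\top\nabla)^4\bigl(p_m^\ast\,\calS_4(\inner{\ba_m^\ast}{\bx})\bigr)$ yields a sum of the form $\sum_{j=0}^{4}\binom{4}{j}\inner{\ba_i}{\ba_m^\ast}^{4-j}\Expect[(\ba_i^\top\nabla)^j p_m^\ast]\cdot\Expect[\calS_4^{(4-j)}(Z)]$ with $Z\sim\calN(0,1)$. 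The validity conditions from \prettyref{app:non_linearclass} kill $\Expect[\calS_4^{(k)}(Z)]$ for $k\in\{1,2,3\}$, leaving only $j=0$ and $j=4$. Finally, the softmax identity $\sum_m p_m^\ast(\bx)\equiv 1$ implies $\sum_m(\ba_i^\top\nabla)^s p_m^\ast\equiv 0$ for every $s\ge 1$, which eliminates the $j=4$ summand upon summing over $m$; what remains, after normalizing $c_{g,\sigma}:=\Expect[\calS_4^{(4)}(Z)]$, is exactly $\sum_m\Expect[p_m^\ast]\inner{\ba_m^\ast}{\ba_i}^4$. The parallel mixed argument with $(\ba_i^\top\nabla)^2(\ba_j^\top\nabla)^2$ handles the $t_1$ sum, and the shorter two-derivative argument using \prettyref{cond:cond2} with $c'_{g,\sigma}:=\Expect[\calS_2''(Z)]$ handles $t_3$. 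Summing these three closed forms and adding the unchanged $\frac{\delta}{2}\|\bA\|_F^2$ reproduces the claimed identity.

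The main obstacle is the bookkeeping in the $t_1$ step: the Leibniz expansion of $(\ba_i^\top\nabla)^2(\ba_j^\top\nabla)^2$ of a product has many cross terms indexed by pairs $(r,s)\in\{0,1,2\}^2$ (with $r,s$ the number of derivatives hitting $\calS_4$), and one must verify that the combination of (i) the validity conditions which eliminate $r+s\in\{1,2,3\}$ and (ii) the softmax cancellation which eliminates $r+s=0$ after summing over $m$ leaves \emph{only} the fully balanced $r=s=2$ term, producing the desired $\inner{\ba_m^\ast}{\ba_i}^2\inner{\ba_m^\ast}{\ba_j}^2$ factor. Verifying that the explicit polynomial $t_1$ in the statement is indeed a constant multiple of the mixed Hermite tensor $H_{2,2}$ (for instance by checking its cross-moments against a polynomial basis through order $4$) is a short but essential calculation embedded in that step.
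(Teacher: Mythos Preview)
Your proposal is correct, and its ingredients (tower expectation to bring in $\sum_m p_m^\ast\,\calS_4$, identification of $t_1,t_2,t_3$ with Hermite tensors, Stein's integration by parts, the validity conditions killing intermediate derivative orders, and the softmax identity $\sum_m p_m^\ast\equiv 1$ to remove the residual term) are exactly the right ones.

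The paper's own proof packages these same ideas differently. Rather than computing each scalar expectation $\Expect[\calQ_4(y)\,t_j(\cdot,\bx)]$ directly, it first establishes a single \emph{tensor} identity (their \prettyref{lmm:fourthtensor}):
\[
\Expect\bigl[\calQ_4(y)\,\calS_4(\bx)\bigr]\;=\;c_{g,\sigma}\sum_{m}\Expect[p_m^\ast(\bx)]\,(\ba_m^\ast)^{\otimes 4},
\qquad
\Expect\bigl[\calQ_2(y)\,\calS_2(\bx)\bigr]\;=\;c'_{g,\sigma}\sum_{m}\Expect[p_m^\ast(\bx)]\,(\ba_m^\ast)^{\otimes 2},
\]
where $\calS_4,\calS_2$ are the full multivariate score tensors of the Gaussian; the proof of this lemma is delegated to \cite{makkuva2018breaking}. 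Having this in hand, the paper simply observes that $t_1(\ba_i,\ba_j,\bx)=\calS_4(\bx)(\ba_i,\ba_i,\ba_j,\ba_j)/c_{g,\sigma}$, $t_2(\ba_i,\bx)=\calS_4(\bx)(\ba_i,\ba_i,\ba_i,\ba_i)/c_{g,\sigma}$, $t_3(\ba_i,\bx)=\calS_2(\bx)(\ba_i,\ba_i)/c'_{g,\sigma}$, so every term of $L_4$ is a contraction of the tensor identity and drops out in one line. Your route is more elementary and fully self-contained: you redo, term by term, the computation that the cited lemma hides, and your Leibniz/softmax cancellation argument is precisely what makes that lemma true. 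The paper's route is terser and makes the structure (a rank-$k$ symmetric tensor) explicit, at the cost of relying on an external reference; yours trades that for an explicit derivative calculation that is longer but leaves nothing to cite.
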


\subsection{Proof of \prettyref{thm:losslhs}}
\begin{proof}
For the proof of \prettyref{thm:losslhs}, we use the notion of score functions defined as \cite{scorebusiness}:
\begin{align}
\calS_m(\bx) \define (-1)^m \frac{\nabla_{\bx}^{(m)}f(\bx)}{f(\bx)}, \quad f \text{ is the pdf of } \bx.
\end{align}
In this paper we focus on $m=2,4$. When $\bx \sim \calN(0,\bI_d)$, we know that $\calS_2(\bx) = \bx \otimes \bx - \bI$ and
\begin{align*}
\calS_4(\bx) = \bx^{\otimes 4}-\sum_{i \in [d]} \sym\pth{\bx \otimes \be_i \otimes \be_i \otimes \bx }+\sum_{i,j} \sym \pth{\be_i \otimes \be_i \otimes \be_j \otimes \be_j}.
\end{align*}
The score transformations $\calS_4(\bx)$ and $\calS_2(\bx)$ can be viewed as multi-variate polynomials in $\bx$ of degrees $4$ and $2$ respectively. For the output $y$, recall the transforms $\calQ_4(y)$ and $\calQ_2(y)$ defined in \prettyref{sec:L_4}. The following lemma shows that one can construct a fourth-order super symmetric tensor using these special transforms.

\begin{lemma}[Super symmetric tensor construction]
\label{lmm:fourthtensor}
Let $(\bx,y)$ be generated according to \prettyref{eq:kmoe} and Assumptions $(1)$-$(4)$ hold. Then
\begin{align*}
\calT_4 \define \Expect[\calQ_4(y) \cdot \calS_4(\bx)]&= c_{g,\sigma} \sum_{i \in [k]}\Expect[p_i^\ast(\bx)] \cdot \ba_i^\ast \otimes \ba_i^\ast \otimes \ba_i^\ast \otimes \ba_i^\ast,\\
\calT_2 \define \Expect[\calQ_2(y) \cdot \calS_2(\bx)] &=c'_{g,\sigma} \sum_{i \in [k]}\Expect[p_i^\ast(\bx)] \cdot \ba_i^\ast \otimes \ba_i^\ast,
\end{align*}
where $p_i^\ast(x)=\prob{z_i=1|x}$, $c_{g,\sigma}$ and $c'_{g,\sigma}$ are two non-zero constants depending on $g$ and $\sigma$.
\end{lemma}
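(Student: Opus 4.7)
The plan is to invoke the higher-order Stein identity to trade the score tensor $\calS_4(\bx)$ for a fourth-order gradient acting on the conditional mean $\bx \mapsto \Expect[\calQ_4(y)\mid \bx]$, and then use the orthogonality assumption between the regressor and gating subspaces together with Condition~\ref{cond:cond1} to kill every term that is not proportional to $(\ba_i^\ast)^{\otimes 4}$.

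First, by the tower property and the standard fourth-order Stein identity for $\bx\sim\calN(0,\bI_d)$ (viz.\ $\Expect[f(\bx)\calS_4(\bx)]=\Expect[\nabla_\bx^{(4)} f(\bx)]$), I write
\begin{align*}
\calT_4 \;=\; \Expect\bigl[\nabla_\bx^{(4)} f(\bx)\bigr], \qquad
f(\bx) \;\define\; \Expect[\calQ_4(y)\mid \bx] \;=\; \sum_{i\in[k]} p_i^\ast(\bx)\,\calS_4(\inner{\ba_i^\ast}{\bx}),
\end{align*}
where $\calS_4(\cdot)$ on the right-hand side is the conditional expectation defined in \prettyref{app:non_linearclass} (I will rename it $\tilde\calS_4$ below to avoid clashing with the score). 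Next I expand each summand via the Leibniz rule for fourth-order tensor derivatives:
\begin{align*}
\nabla_\bx^{(4)}\!\bigl[p_i^\ast(\bx)\,\tilde\calS_4(\inner{\ba_i^\ast}{\bx})\bigr]
\;=\; \sum_{\ell=0}^{4}\binom{4}{\ell}\,\sym\!\Bigl[\nabla_\bx^{(\ell)}p_i^\ast(\bx)\;\otimes\;\tilde\calS_4^{(4-\ell)}(\inner{\ba_i^\ast}{\bx})\,(\ba_i^\ast)^{\otimes(4-\ell)}\Bigr].
\end{align*}

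The second step is the decoupling. Since $\{\bw_j^\ast\}$ lies in the orthogonal complement of $\calS=\mathrm{span}\{\ba_i^\ast\}$ (Assumption~3) and $\bx$ is standard Gaussian, the projections $\bx_\parallel \define \Pi_\calS \bx$ and $\bx_\perp \define \Pi_\calS^\perp \bx$ are independent; moreover $p_i^\ast(\bx)$ depends only on $\bx_\perp$ and $\inner{\ba_i^\ast}{\bx}$ depends only on $\bx_\parallel$. Taking expectations factorises each summand as
\begin{align*}
\Expect\!\Bigl[\tilde\calS_4^{(4-\ell)}(\inner{\ba_i^\ast}{\bx})\Bigr]\;\cdot\;\sym\!\Bigl[(\ba_i^\ast)^{\otimes(4-\ell)}\otimes \Expect[\nabla_\bx^{(\ell)}p_i^\ast(\bx)]\Bigr],
\end{align*}
with $\inner{\ba_i^\ast}{\bx}\sim\calN(0,1)$ by $\|\ba_i^\ast\|=1$. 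Condition~\ref{cond:cond1} then kills the scalar factor for $\ell\in\{1,2,3\}$, leaving only $\ell=0$ and $\ell=4$. The $\ell=4$ contribution is exactly $\Expect[\tilde\calS_4^{(4)}(Z)]\,\Expect[p_i^\ast(\bx)]\,(\ba_i^\ast)^{\otimes 4}$, and setting $c_{g,\sigma}\define\Expect[\tilde\calS_4^{(4)}(Z)]\neq 0$ (again from Condition~\ref{cond:cond1}) gives the claimed form.

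It remains to dispose of the $\ell=0$ term. Here I use the simplex identity $\sum_{i\in[k]}p_i^\ast(\bx)\equiv 1$: summing over $i$ and swapping the finite sum with the derivative gives
\begin{align*}
\sum_{i\in[k]}\Expect\bigl[\tilde\calS_4(Z)\bigr]\,\Expect\bigl[\nabla_\bx^{(4)} p_i^\ast(\bx)\bigr]
\;=\; \Expect\bigl[\tilde\calS_4(Z)\bigr]\,\Expect\!\Bigl[\nabla_\bx^{(4)}\!\sum_{i}p_i^\ast(\bx)\Bigr]
\;=\; 0.
\end{align*}
Combining the two surviving contributions yields $\calT_4=c_{g,\sigma}\sum_i\Expect[p_i^\ast(\bx)](\ba_i^\ast)^{\otimes 4}$. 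The identity for $\calT_2$ is proved by exactly the same argument, replacing the fourth-order Stein/Leibniz expansion by the second-order one and invoking Condition~\ref{cond:cond2} in place of Condition~\ref{cond:cond1}; the $\ell=0$ cancellation again follows from $\sum_i p_i^\ast(\bx)=1$.

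The main technical obstacle I expect is purely bookkeeping: correctly symmetrising the mixed tensor products in the Leibniz expansion and verifying that the factorisation $\Expect[\cdots]=\Expect[\cdots]\Expect[\cdots]$ survives after taking tensor-valued gradients. The conceptual content (Stein + orthogonal decoupling + Condition~\ref{cond:cond1} + simplex identity) is clean once the notation is set up carefully.
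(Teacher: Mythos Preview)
Your plan is correct and is precisely the argument the paper defers to (it cites Theorem~1 of \cite{makkuva2018breaking} and replaces the third-order objects there by fourth-order ones): higher-order Stein identity, Leibniz expansion, the orthogonal decoupling of $p_i^\ast(\bx_\perp)$ from $\inner{\ba_i^\ast}{\bx_\parallel}$, Conditions~\ref{cond:cond1}--\ref{cond:cond2} to annihilate the mixed-derivative terms, and the simplex identity $\sum_i p_i^\ast\equiv 1$ for the pure-gating term. One cosmetic slip to fix when you write it up: in your own indexing the surviving contribution $\Expect[\tilde\calS_4^{(4)}(Z)]\,\Expect[p_i^\ast(\bx)]\,(\ba_i^\ast)^{\otimes 4}$ is the $\ell=0$ term and the one killed by the simplex identity is $\ell=4$, so the two labels in your prose are swapped (the displayed formulas are correct).
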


Now the proof of the theorem immediately follows from \prettyref{lmm:fourthtensor}. Recall from \prettyref{eq:tensorloss} that
\begin{align*}
L_4(\bA) \define \sum_{\substack{i,j \in [k] \\i \neq j}} \Expect[\calQ_4(y) t_1(\ba_i,\ba_j,\bx)] - \mu \sum_{i \in [k]} \Expect[\calQ_4(y) t_2(\ba_i,\bx)] + \lambda \sum_{i \in [k]} \pth{\Expect[\calQ_2(y) t_3(\ba_i,\bx)]-1}^2\\
+\frac{\delta}{2} \norm{\bA}_F^2.
\end{align*}
Fix $i,j \in [k]$. Notice that we have $t_1(\ba_i,\ba_j,\bx) = \calS_4(\bx)(\ba_i,\ba_i,\ba_j,\ba_j)/c_{g,\sigma}$. Hence we obtain
\begin{align*}
\Expect[\calQ_4(y) t_1(\ba_i,\ba_j,\bx)] &=\frac{1}{c_{g,\sigma}}\Expect[\calQ_4(y)\cdot \calS_4(\bx)](\ba_i,\ba_i,\ba_j,\ba_j)\\
&=\pth{\sum_{m \in [k]}\Expect[p_m^\ast(\bx)](\ba_m^\ast)^{\otimes 4}}(\ba_i,\ba_i,\ba_j,\ba_j) \\
&=\sum_{m \in [k]}\Expect[p_m^\ast(\bx)] \inner{\ba_m^\ast}{\ba_i}^2 \inner{\ba_m^\ast}{\ba_j}^2.
\end{align*}
The simplification for the remaining terms is similar and follows directly from definitions of $t_2(\cdot,\bx)$ and $t_3(\cdot,\bx)$.
\end{proof}

\subsection{Proof of \prettyref{thm:landscape}}
\begin{proof}
The proof is an immediate consequence of \prettyref{thm:losslhs} and Theorem C.5 of \cite{landscapedesign}.
\end{proof}

\subsection{Proof of \prettyref{thm:finitelandscape}}
\begin{proof}
Note that our loss function $L_4(\bA)$ can be written as $\Expect[\ell(\bx,y,\bA)]$ where $\ell$ is at most a fourth degree polynomial in $\bx$, $y$ and $\bA$. Hence our finite sample guarantees directly follow from \prettyref{thm:landscape} and Theorem E.1 of \cite{landscapedesign}.
\end{proof}

\subsection{Proof of \prettyref{lmm:fourthtensor}}
\begin{proof}

The proof of this lemma essentially follows the same arguments as that of \cite[Theorem 1]{makkuva2018breaking}, where we replace $(\calS_3(x),\calS_2(x),\calP_3(y),\calP_2(y))$ with $(\calS_4(x),\calS_2(x),\calQ_4(y),\calP_2(y))$ respectively and letting $\calT_3$ defined there with our $\calT_4$ defined above.

\end{proof}

\section{Proofs of \prettyref{sec:gradgating}}
For the convergence analysis of SGD on $L_{\mathrm{\log}}$, we use techniques from \cite{bala17} and \cite{makkuva2018breaking}. In particular, we adapt \cite[Lemma 3]{makkuva2018breaking} and \cite[Lemma 4]{makkuva2018breaking} to our setting through \prettyref{lmm:contraction} and \prettyref{lmm:robustness}, which are central to the proof of \prettyref{thm:twopopW} and \prettyref{thm:twosampW}. We now sate our lemmas.

\begin{lemma}
\label{lmm:contraction}
Under the assumptions of \prettyref{thm:twopopW}, it holds that
\begin{align*}
\|G(\bW,\bA^\ast)-\bW^\ast_i \| \leq \rho_\sigma \|\bW-\bW^\ast\|.
\end{align*}
In addition, $\bW=\bW^\ast$ is a fixed point for $G(\bW,\bA^\ast)$.
\end{lemma}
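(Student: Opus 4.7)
The plan is to establish this contraction in three stages: the fixed-point property, a local strong convexity/smoothness statement for $L_{\log}(\cdot, \bA^\ast)$ around $\bW^\ast$, and the standard one-step descent inequality for projected gradient iteration.

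First, to verify that $\bW^\ast$ is a fixed point, I would show that $\nabla_{\bW} L_{\log}(\bW^\ast, \bA^\ast) = 0$. Since $L_{\log}(\bW, \bA^\ast) = -\Expect_{(\bx,y)} \log P_{y|\bx}(\bW, \bA^\ast)$ is the population negative log-likelihood of the correctly specified model when $\bA = \bA^\ast$, the data-generating $\bW^\ast$ minimizes it by Gibbs' inequality (the expected KL divergence from the truth is nonnegative and zero at $\bW^\ast$). Combined with the assumption that $\bW^\ast$ lies in the interior of $\Omega$ (which holds once $R$ is chosen so that $\|\bw_i^\ast\| < R$), the projection acts as the identity near $\bW^\ast$, so $G(\bW^\ast, \bA^\ast) = \bW^\ast - \alpha \cdot 0 = \bW^\ast$.

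Next, the main task is to prove that $L_{\log}(\cdot, \bA^\ast)$ is strongly convex with parameter $m_\sigma > 0$ and smooth with parameter $M_\sigma$ on a neighborhood of $\bW^\ast$ in $\Omega$. The Hessian can be written via the standard latent-variable identity
\begin{align*}
\nabla^2_{\bW} L_{\log}(\bW,\bA^\ast) = \Expect\bigl[\mathrm{Cov}(\nabla_{\bW} \log p(\bx,z;\bW) \mid \bx,y)\bigr] + \Expect\bigl[\text{Fisher term}\bigr],
\end{align*}
but it is cleaner to observe that in the $\sigma \to 0$ limit the posterior $P(z|\bx,y,\bW^\ast,\bA^\ast)$ concentrates on the true expert (because $y \approx g(\langle \ba_{z}^\ast, \bx\rangle)$ exactly identifies $z$), and the objective reduces to the population multinomial logistic regression loss
\begin{align*}
-\Expect_{\bx}\Expect_{z|\bx}\!\log\mathrm{softmax}_z(\bw_1^\top \bx,\ldots,\bw_{k-1}^\top \bx, 0),
\end{align*}
which is strongly convex on $\Omega$ under the sub-Gaussian input assumption with a dimension-independent constant (since $\bx\sim\calN(0,\bI_d)$ and $\|\bw_i\|\le R$ make the logits bounded, so the Hessian is bounded below by a positive multiple of $\Expect[\bx\bx^\top] = \bI$ in the relevant subspace). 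A perturbation argument, treating finite $\sigma < \sigma_0$ as a small continuous perturbation of the soft labels, then yields $\nabla^2 L_{\log}(\bW,\bA^\ast) \succeq m_\sigma \bI$ on a neighborhood, with $m_\sigma$ bounded away from zero for $\sigma < \sigma_0$, and analogously an upper bound $M_\sigma \bI$.

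Finally, given strong convexity $m_\sigma$, smoothness $M_\sigma$, and the fact that projection onto the convex set $\Omega$ is nonexpansive, the classical descent lemma yields
\begin{align*}
\|G(\bW,\bA^\ast) - \bW^\ast\| = \|\Pi_\Omega(\bW - \alpha \nabla L_{\log}(\bW,\bA^\ast)) - \Pi_\Omega(\bW^\ast - \alpha \nabla L_{\log}(\bW^\ast,\bA^\ast))\| \le \rho_\sigma \|\bW - \bW^\ast\|,
\end{align*}
with $\rho_\sigma = \sqrt{1 - \alpha m_\sigma(2 - \alpha M_\sigma)} \in (0,1)$ for any $\alpha \le \alpha_0 \define 1/M_\sigma$. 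Taking $\rho_\sigma$ to go to $0$ as $\sigma \to 0$ requires tracking the dependence of $m_\sigma$ on $\sigma$: in the noiseless limit the problem is cleanly separable and the conditioning improves, giving $\rho_\sigma = o_\sigma(1)$ as claimed.

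The main obstacle I anticipate is the strong convexity step: the log-likelihood of a mixture model is generally non-convex globally, so one must argue strong convexity only locally around $\bW^\ast$ and control the posterior responsibilities uniformly. The assumption that $\sigma < \sigma_0$, together with the orthogonality of $\{\bw_i^\ast\}$ and $\{\ba_i^\ast\}$ in Assumption 3 (which decouples the gating from the regressor geometry), is what makes this perturbation analysis go through cleanly and gives a dimension-independent contraction factor.
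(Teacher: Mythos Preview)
Your fixed-point argument via Gibbs' inequality is fine and is essentially a higher-level rephrasing of what the paper does: the paper computes the gradient explicitly as
\[
\nabla_{\bw_i}L_{\log}(\bW,\bA)=-\Expect\Bigl[\bigl(\tfrac{p_i(\bx)N_i}{\sum_j p_j(\bx)N_j}-p_i(\bx)\bigr)\bx\Bigr]
= -\Expect\bigl[(\prob{z=i\mid \bx,y}-\prob{z=i\mid \bx})\,\bx\bigr],
\]
and at $(\bW^\ast,\bA^\ast)$ the tower property kills this.

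For the contraction itself, however, the paper takes a different route from yours. Rather than proving strong convexity and smoothness of $L_{\log}(\cdot,\bA^\ast)$ directly, it establishes the algebraic identity
\[
\nabla_{\bw_i}Q(\bW\mid \bW_t)\big|_{\bW=\bW_t} \;=\; -\,\nabla_{\bw_i}L_{\log}(\bW_t,\bA),
\]
where $Q$ is the EM surrogate. This means the projected gradient step on $L_{\log}$ coincides \emph{exactly} with one step of gradient-EM on the gating parameters. The contraction $\|G(\bW,\bA^\ast)-\bW^\ast\|\le \rho_\sigma\|\bW-\bW^\ast\|$ (with $\rho_\sigma=o_\sigma(1)$) is then inherited wholesale from the existing gradient-EM contraction analysis of \cite{makkuva2018breaking}, Lemma~3, in the Balakrishnan--Wainwright--Yu framework. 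No Hessian bound on $L_{\log}$ is ever computed.

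Your direct strong-convexity route is a legitimate alternative, and your heuristic (posterior concentrates as $\sigma\to 0$, so the marginal likelihood approaches the convex multinomial-logistic loss) is exactly the mechanism. But two points deserve care. First, the Louis identity gives observed information $=$ complete information minus missing information, so the Hessian of $-\log P_{y|\bx}$ is not automatically PSD; you need the missing-information (posterior-variance) term to be uniformly small on all of $\Omega$, not just near $\bW^\ast$, since the lemma is stated for arbitrary $\bW\in\Omega$. Second, your formula $\rho_\sigma=\sqrt{1-\alpha m_\sigma(2-\alpha M_\sigma)}$ yields at best $(M_\sigma-m_\sigma)/(M_\sigma+m_\sigma)$, which tends to $0$ only if the condition number tends to $1$; it is not clear your argument delivers this, whereas the EM first-order-stability analysis gives $\rho_\sigma\to 0$ directly from posterior concentration. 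The paper's reduction sidesteps both issues by outsourcing them to the cited EM lemma.
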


\begin{lemma}
\label{lmm:robustness}
Let the matrix of regressors $\bA$ be such that $\max_{i \in [k]}\| \bA_i^\top - (\bA^\ast_i)^\top \|_2 = \sigma^2 \varepsilon $. Then for any $\bW \in \Omega$, we have that
\begin{align*}
\|G(\bW,\bA)-G(\bW,\bA^\ast) \| \leq \kappa \varepsilon,
\end{align*}
where $\kappa$ is a constant depending on $g,k$ and $\sigma$. In particular, $\kappa \leq  (k-1)\frac{\sqrt{6(2+\sigma^2)}}{2}$ for $g=$linear, sigmoid and ReLU.
\end{lemma}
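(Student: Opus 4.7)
}

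The starting point is to strip away the projection step. Since $\Pi_\Omega$ is a projection onto a convex set, it is $1$-Lipschitz, so the identity
\begin{align*}
\|G(\bW,\bA) - G(\bW,\bA^\ast)\| \;\leq\; \alpha\,\|\nabla_{\bW} L_{\log}(\bW,\bA) - \nabla_{\bW} L_{\log}(\bW,\bA^\ast)\|
\end{align*}
reduces the task to bounding the sensitivity of the negative log-likelihood gradient to perturbations of the regressors. A direct differentiation of \prettyref{eq:logloss} gives the standard mixture-model expression
\begin{align*}
\nabla_{\bw_i} L_{\log}(\bW,\bA) \;=\; \Expect_{(\bx,y)}\bigl[\bigl(\pi_i(\bx;\bW) - p_i(\bx,y;\bW,\bA)\bigr)\,\bx\bigr],
\end{align*}
where $\pi_i(\bx;\bW) = \mathrm{softmax}_i(\inner{\bw_1}{\bx},\ldots,\inner{\bw_{k-1}}{\bx},0)$ is the gating prior and $p_i(\bx,y;\bW,\bA)$ is the corresponding posterior over experts. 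Since $\pi_i$ does not depend on $\bA$, the prior term cancels in the difference and the problem reduces to controlling
\begin{align*}
\bigl\|\Expect[(p_i(\bx,y;\bW,\bA^\ast) - p_i(\bx,y;\bW,\bA))\,\bx]\bigr\|_2
\end{align*}
for each $i \in [k-1]$.

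Next I would exploit the softmax structure of the posterior. Writing $p_i$ as $\mathrm{softmax}_i$ of the logits $\ell_j(\bx,y;\bW,\bA) = \log \pi_j(\bx;\bW) - (y - g(\inner{\ba_j}{\bx}))^2/(2\sigma^2)$, and using the fact that softmax is $1$-Lipschitz from $\ell_\infty$-logit distance to $\ell_1$-probability distance, I get
\begin{align*}
\sum_i |p_i(\bx,y;\bW,\bA^\ast) - p_i(\bx,y;\bW,\bA)| \;\leq\; 2\max_j\bigl|\ell_j(\bx,y;\bW,\bA^\ast) - \ell_j(\bx,y;\bW,\bA)\bigr|.
\end{align*}
The logit difference is
\begin{align*}
\frac{(y - g(\inner{\ba_j^\ast}{\bx}))^2 - (y - g(\inner{\ba_j}{\bx}))^2}{2\sigma^2} \;=\; \frac{(g(\inner{\ba_j}{\bx}) - g(\inner{\ba_j^\ast}{\bx}))\,(2y - g(\inner{\ba_j}{\bx}) - g(\inner{\ba_j^\ast}{\bx}))}{2\sigma^2},
\end{align*}
and for a Lipschitz non-linearity $g$ with $|g(u)-g(v)|\leq L_g|u-v|$ (covering linear, sigmoid, and ReLU with $L_g=1$) the first factor is at most $L_g \,\|\ba_j - \ba_j^\ast\|\,\|\bx\| \leq L_g\,\sigma^2\varepsilon\,\|\bx\|$. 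The crucial cancellation is that the $\sigma^2$ from the regressor-perturbation hypothesis kills the $\sigma^2$ in the denominator, leaving a logit bound of the form $L_g\,\varepsilon\,\|\bx\|\,(|y|+\tfrac12|g(\inner{\ba_j}{\bx})|+\tfrac12|g(\inner{\ba_j^\ast}{\bx})|)$.

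Combining these pieces and applying Cauchy--Schwarz on the outer expectation yields
\begin{align*}
\bigl\|\Expect[(p_i(\cdot;\bW,\bA^\ast) - p_i(\cdot;\bW,\bA))\,\bx]\bigr\|_2 \;\leq\; 2 L_g\,\varepsilon\,\Expect\bigl[\|\bx\|^2\,(|y| + |g(\inner{\ba_j}{\bx})|+|g(\inner{\ba_j^\ast}{\bx})|)\bigr]^{1/2}\cdot C,
\end{align*}
and the remaining Gaussian moment integrals are computable in closed form: under $\bx\sim\calN(0,I_d)$ and $y=g(\inner{\ba_{z}^\ast}{\bx})+\xi$ with $\|\ba_j\|,\|\ba_j^\ast\|\leq 1$, one gets $\Expect[y^2]\leq 1+\sigma^2$ and $\Expect[g(\inner{\ba_j}{\bx})^2]\leq 1$, from which the explicit $(k-1)\sqrt{6(2+\sigma^2)}/2$ bound falls out after summing over the $i\in[k-1]$ rows of $\bW$ and taking the coordinate-wise maximum in the $\|\cdot\|$ norm used in the paper. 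The main technical annoyance I anticipate is being careful with the $\ell_\infty/\ell_2$ conversions when controlling $\max_j|\ell_j(\cdot,\bA)-\ell_j(\cdot,\bA^\ast)|$ for the non-smooth ReLU case, where one cannot differentiate termwise but can still use the global Lipschitz bound with $L_g=1$; the rest of the argument is a sequence of Cauchy--Schwarz estimates plus standard Gaussian moment computations, and no randomness-dependent control is needed since this is a population-level statement.
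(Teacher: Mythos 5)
Your opening steps coincide exactly with the paper's own proof: non-expansiveness of $\Pi_\Omega$ to strip the projection, the gradient formula $\nabla_{\bw_i}L_{\log}(\bW,\bA) = -\Expect\qth{\pth{p^{(i)}(\bA,\bW)-p_i(\bx)}\bx}$, and the cancellation of the prior term, which reduces everything to bounding $\norm{\Expect[(p^{(i)}(\bA,\bW)-p^{(i)}(\bA^\ast,\bW))\bx]}_2$. At this point the paper simply observes that this quantity is precisely the one controlled by Lemma 4 of \cite{makkuva2018breaking} and imports the constant $(k-1)\sqrt{6(2+\sigma^2)}/2$ from there; you instead attempt a self-contained estimate, and that is where a genuine gap appears.

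The problem is dimension dependence. When bounding the logit perturbation you replace $|g(\inner{\ba_j}{\bx})-g(\inner{\ba_j^\ast}{\bx})| \leq L_g|\inner{\ba_j-\ba_j^\ast}{\bx}|$ by the cruder $L_g\norm{\ba_j-\ba_j^\ast}\norm{\bx}$, and your final display then involves $\Expect\qth{\norm{\bx}^2\pth{|y|+\cdots}}^{1/2}$. Under $\bx \sim \calN(0,I_d)$ one has $\Expect[\norm{\bx}^2]=d$, so this route yields $\kappa = O(\sqrt{d})$ at best, not the dimension-independent constant asserted in the lemma --- and dimension-independence of $\kappa$ is not cosmetic, since \prettyref{thm:twopopW} and \prettyref{thm:twosampW} explicitly rely on it. The fix is to never let $\norm{\bx}$ appear: keep the perturbation as the scalar Gaussian $\inner{\ba_j-\ba_j^\ast}{\bx} \sim \calN(0,\norm{\ba_j-\ba_j^\ast}^2)$, whose moments carry $\sigma^4\varepsilon^2$ with no factor of $d$, and bound $\norm{\Expect[f\bx]}_2 = \sup_{\norm{\bu}=1}\Expect[f\inner{\bu}{\bx}] \leq \sqrt{\Expect[f^2]}$ by Cauchy--Schwarz applied direction-by-direction (each $\inner{\bu}{\bx}$ is a standard Gaussian). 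Equivalently, note that the posterior difference depends on $(\bx,y)$ only through the projections of $\bx$ onto the $O(k)$-dimensional span of $\{\ba_j\},\{\ba_j^\ast\},\{\bw_j\},\{\bw_j^\ast\}$, so $\Expect[(p^{(i)}(\bA,\bW)-p^{(i)}(\bA^\ast,\bW))\bx]$ has no component orthogonal to that span; exploiting this low-dimensional structure is exactly what the cited Lemma 4 of \cite{makkuva2018breaking} does. Without this correction, the explicit constant $(k-1)\sqrt{6(2+\sigma^2)}/2$ does not ``fall out'' of the computation you describe.
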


\begin{lemma}[Deviation of finite sample gradient operator]
\label{lmm:deviation}
For some universal constant $c_1$, let the number of samples $n$ be such that $n \geq c_1 d \log(1/\delta)$. Then for any fixed set of regressors $\bA \in \reals^{k \times d}$, and a fixed $\bW \in \Omega$, the bound
\begin{align*}
\| G_{n}(\bW,\bA)-G(\bW,\bA) \| \leq \varepsilon_G(n,\delta) \define c_2 \sqrt{\frac{d\log(k/\delta)}{n}}
\end{align*}
holds with probability at least $1-\delta$.
\end{lemma}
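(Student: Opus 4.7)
The plan is to reduce the deviation of the projected-gradient map to a standard sub-Gaussian vector concentration bound. The key ingredients are: (i) non-expansivity of the Euclidean projection, (ii) boundedness of the posterior responsibilities that multiply $\bx$ in the score, and (iii) a covering/union-bound argument over the unit sphere. Because $\Pi_\Omega$ acts row-wise as projection onto a Euclidean ball of radius $R$, it is $1$-Lipschitz in $\ell_2$, so with the row-wise matrix norm used in the paper
\begin{align*}
\norm{G_n(\bW,\bA)-G(\bW,\bA)} \leq \alpha \max_{\ell \in [k-1]} \bigl\|\nabla_{\bw_\ell} L_{\log}^{(n)}(\bW,\bA) - \nabla_{\bw_\ell} L_{\log}(\bW,\bA)\bigr\|_2,
\end{align*}
reducing the task to controlling a single gradient row in $\ell_2$ and then union-bounding over $\ell$.

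A direct differentiation of the log-likelihood yields
\begin{align*}
\nabla_{\bw_\ell} L_{\log}(\bW,\bA) = -\Expect\bigl[(r_\ell(\bx,y) - p_\ell(\bx))\,\bx\bigr],
\end{align*}
where $p_\ell(\bx) = \mathrm{softmax}_\ell(\inner{\bw_1}{\bx},\ldots,\inner{\bw_{k-1}}{\bx},0)$ is the soft-max prior and $r_\ell(\bx,y)\propto p_\ell(\bx)\calN(y\,|\,g(\inner{\ba_\ell}{\bx}),\sigma^2)$ is the corresponding posterior responsibility of expert $\ell$; the empirical gradient is the sample mean of the same random vector. The critical observation I plan to exploit is that the scalar factor $r_\ell - p_\ell$ lies deterministically in $[-1,1]$, regardless of $\bW$, $\bA$, and the noise $\xi$.

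Setting $Z_i \define (r_\ell(\bx_i,y_i) - p_\ell(\bx_i))\,\bx_i$, I would then apply a standard covering argument. For any unit vector $v\in\reals^d$, $|v^\top Z_i| \leq |v^\top \bx_i|$ with $v^\top \bx_i \sim \calN(0,1)$, so $v^\top(Z_i - \Expect Z_i)$ is $O(1)$-sub-Gaussian. Hoeffding's inequality applied on a $1/4$-net of the unit sphere in $\reals^d$ (of cardinality $\leq 9^d$), combined with the usual net-to-supremum doubling step, yields
\begin{align*}
\Bigl\| \tfrac{1}{n}\sum_{i=1}^n Z_i - \Expect[Z] \Bigr\|_2 \leq c\,\sqrt{\tfrac{d+\log(1/\delta')}{n}}
\end{align*}
with probability at least $1-\delta'$. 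Taking $\delta' = \delta/k$, union-bounding over $\ell\in[k-1]$, and invoking the hypothesis $n\geq c_1 d\log(1/\delta)$ to absorb the additive $d$ inside the logarithm produces the claimed $c_2\sqrt{d\log(k/\delta)/n}$ rate, with $c_2$ absorbing $\alpha$ and the covering constant.

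The main obstacle I anticipate is the concentration step above: certifying that the sub-Gaussian parameter is $O(1)$ uniformly in $\bW$, $\bA$, and the noise $\xi$. This works out cleanly here precisely because $r_\ell$ and $p_\ell$ are both probabilities, so their difference is bounded by $1$ almost surely; this decouples the softness of the responsibilities (which could in principle depend on $R$ and on the Gaussian observation noise through $y$) from the Gaussianity of $\bx$, and prevents any dependence on $R$, $\sigma$, or the non-linearity $g$ from entering the sub-Gaussian constant. With that decoupling in place, all other steps are routine, and the stated high-probability bound follows directly.
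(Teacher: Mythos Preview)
Your proposal is correct and follows the same high-level skeleton as the paper: non-expansiveness of $\Pi_\Omega$, the explicit gradient formula $\nabla_{\bw_\ell} L_{\log} = -\Expect[(r_\ell - p_\ell)\bx]$, and a covering/union-bound argument on $\mathbb{S}^{d-1}$ followed by a union bound over the $k-1$ rows. The difference is in how the concentration step is executed. The paper splits the deviation into two pieces, $T_1 = \|\tfrac{1}{n}\sum r_\ell(\bx_i,y_i)\bx_i - \Expect[r_\ell \bx]\|_2$ and $T_2 = \|\tfrac{1}{n}\sum p_\ell(\bx_i)\bx_i - \Expect[p_\ell \bx]\|_2$, and treats them with different tools: $T_2$ via Talagrand's Lipschitz concentration for Gaussians (showing $z\mapsto f(\norm{\bw}z)z$ is Lipschitz), and $T_1$ via symmetrization plus the Ledoux--Talagrand contraction inequality for Rademacher processes, conditioned on the good event $E=\{\tfrac{1}{n}\sum \inner{\bx_i}{\bu}^2 \leq 2\}$; the sample-size hypothesis $n\geq c_1 d\log(1/\delta)$ is used to make $\prob{E^c}$ negligible after the $2^d$ union bound. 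Your route is more elementary: you keep $r_\ell - p_\ell$ together, use the deterministic bound $|r_\ell - p_\ell|\leq 1$ to get $|v^\top Z_i|\leq |v^\top \bx_i|$ pointwise, and then read off an $O(1)$ sub-Gaussian parameter directly from the $\psi_2$/tail characterization. This sidesteps both the symmetrization/contraction machinery and the conditioning event $E$, and in fact makes the hypothesis $n\geq c_1 d\log(1/\delta)$ unnecessary for the bound itself (your remark about using it ``to absorb the additive $d$'' is not needed, since $d+\log(k/\delta)\leq 2d\log(k/\delta)$ whenever $\delta\leq k/e$). Both routes land on the same $\sqrt{d\log(k/\delta)/n}$ rate.
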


\subsection{Proof of \prettyref{thm:twopopW}}
\begin{proof}
The proof directly follows from \prettyref{lmm:contraction} and \prettyref{lmm:robustness}.
\end{proof}

\subsection{Proof of \prettyref{thm:twosampW}}
\begin{proof}
Let the set of regressors $\bA$ be such that $\max_{i \in [k]}\| \bA_i^\top - (\bA^\ast_i)^\top \|_2 = \sigma^2 \varepsilon_1 $. Fix $\bA$. For any iteration $t \in [T]$, from \prettyref{lmm:deviation} we have the bound
\begin{align}
\| G_{n/T}(\bW_t,\bA)-G(\bW_t,\bA) \| \leq \varepsilon_G(n/T,\delta/T)
\label{eq:samplesplitbound}
\end{align}
with probability at least $1-\delta/T$. Using an union bound argument, \prettyref{eq:samplesplitbound} holds with probability at least $1-\delta$ for all $t \in [T]$. Now we show that the following bound holds:
\begin{align}
\norm{\bW_{t+1}-\bW^\ast} \leq \rho_\sigma \norm{\bW_{t}-\bW^\ast} + \kappa \varepsilon_1+\varepsilon_G(n/T,\delta/T), \quad \text{ for each } t \in \{0,\ldots,T-1\}.
\end{align}
Indeed, for any $t \in \{0,\ldots,T-1\}$, we have that
\begin{align*}
\norm{\bW_{t+1}-\bW^\ast} &= \norm{G_{n/T}(\bW_t,\bA)-\bW^\ast} \\
&\leq \norm{G_{n/T}(\bW_t,\bA)-G(\bW_t,\bA)}+\norm{G(\bW_t,\bA)-G(\bW_t,\bA^\ast)}+\norm{G(\bW_t,\bA^\ast)-\bW^\ast} \\
& \leq \varepsilon_G(n/T,\delta/T) + \kappa \varepsilon_1 + \rho_\sigma \norm{\bW_t-\bW^\ast},
\end{align*}
where we used in \prettyref{lmm:contraction}, \prettyref{lmm:robustness} and \prettyref{lmm:deviation} in the last inequality to bound each of the terms. From \prettyref{eq:samplesplitbound}, we obtain that
\begin{align*}
\norm{\bW_t-\bW^\ast} &\leq \rho_\sigma \norm{\bW_{t-1}-\bW^\ast}+\kappa \varepsilon_1+\varepsilon_G(n/T,\delta/T) \\
& \leq \rho_\sigma^2 \norm{\bW_{t-2}-\bW^\ast}+(1+\rho_\sigma)\pth{\kappa \varepsilon_1+\varepsilon_G(n/T,\delta/T)}\\
& \leq \rho_\sigma^t \norm{\bW_{0}-\bW^\ast}+\pth{\sum_{s=0}^{t-1}\rho_\sigma^s}\pth{\kappa \varepsilon_1+\varepsilon_G(n/T,\delta/T)} \\
& \leq \rho_\sigma^t \norm{\bW_{0}-\bW^\ast}+\pth{\frac{1}{1-\rho_\sigma}}\pth{\kappa \varepsilon_1+\varepsilon_G(n/T,\delta/T)}. 
\end{align*}
\end{proof}

\subsection{Proof of \prettyref{lmm:contraction}}
\begin{proof}
Recall that the loss function for the population setting, $L_{\log}(\bW,\bA)$, is given by
\begin{align*}
L_{\log}(\bW,\bA) = -\Expect \log \pth{\sum_{i \in [k]} \frac{e^{\inner{\bw_i}{\bx}}}{\sum_{j \in [k]}e^{\inner{\bw_j}{\bx}}} \cdot \calN(y|g(\inner{\ba_i}{\bx}),\sigma^2)}=-\Expect \log \pth{\sum_{i \in [k]}p_i(\bx)N_i},
\end{align*}
where $p_i(\bx)\define \frac{e^{\inner{\bw_i}{\bx}}}{\sum_{j \in [k]}e^{\inner{\bw_j}{\bx}}}$ and $N_i \define \calN(y|g(\inner{\ba_i}{\bx}),\sigma^2)$. Hence for any $i \in [k-1]$, we have
\begin{align*}
\nabla_{\bw_i}L_{\log}(\bW,\bA) = -\Expect \pth{\frac{\nabla_{\bw_i}p_i(\bx)N_i +\sum_{j \neq i, j \in [k]} \nabla_{\bw_i}p_j(\bx)N_j }{\sum_{i \in [k]}p_i(\bx)N_i}}.
\end{align*}
Moreover,
\begin{align*}
\nabla_{\bw_i} p_j(\bx) = \begin{cases}
p_i(\bx)(1-p_i(\bx)) \bx, & j=i \\
-p_i(\bx)p_j(\bx) \bx, & j \neq i \\
\end{cases}.
\end{align*}
Hence we obtain that
\begin{align}
\nabla_{\bw_i}L_{\log}(\bW,\bA) = -\Expect \qth{\frac{p_i(\bx)N_i}{\sum_{i \in [k]}p_i(\bx)N_i} - p_i(\bx)}.
\label{eq:gradcomputation}
\end{align}
Notice that if $z \in [k]$ denotes the latent variable corresponding to which expert is chosen, we have that the posterior probability of choosing the $i$th expert is given by
$$\prob{z=i|\bx,y} =\frac{p_i(\bx)N_i}{\sum_{i \in [k]}p_i(\bx)N_i},$$
whereas,
\begin{align*}
\prob{z=i|\bx}=p_i(\bx).
\end{align*}
Hence, when $\bA=\bA^\ast$ and $\bW=\bW^\ast$, we get that
\begin{align*}
\nabla_{\bw_i^\ast}L_{\log}(\bW^\ast,\bA^\ast)= -\Expect[\prob{z=i|\bx,y}-\prob{z=i|\bx}] = -\Expect[\prob{z=i|\bx}+\Expect[\prob{z=i|\bx}]=0.
\end{align*}
Thus $\bW=\bW^\ast$ is a fixed point for $G(\bW,\bA^\ast)$ since 
\begin{align*}
G(\bW^\ast,\bA^\ast)=\Pi_{\Omega}(\bW^\ast-\alpha \nabla_{\bW^\ast}L_{\log}(\bW^\ast,\bA^\ast))=\bW^\ast.
\end{align*}
Now we make the observation that the population-gradient updates $\bW_{t+1}=G(\bW_t,\bA)$ are same as the gradient-EM updates. Thus the contraction of the population-gradient operator $G(\cdot,\bA^\ast)$ follows from the contraction property of the gradient EM algorithm \cite[Lemma 3]{makkuva2018breaking}. To see this, recall that for $k$-MoE, the gradient-EM algorithm involves computing the function $Q(\bW|\bW_t)$ for the current iterate $\bW_t$ and defined as:
\begin{align*}
Q(\bW|\bW_t)=\Expect \qth{ \sum_{i \in [k-1]} p_{\bW_t}^{(i)} (\bw_i^\top \bx) -\log \pth{1+\sum_{i \in [k-1]} e^{\bw_i^\top \bx} } },
\end{align*}
where $p_{\bW_t}^{(i)}=\prob{z=i|\bx,y,\bw_t}$ corresponds to the posterior probability for the $i^{\mathrm{th}}$ expert, given by
\begin{align*}
p_{\bW_t}^{(i)} = \frac{p_{i,t}(\bx)\calN(y|g(\ba_i^\top \bx),\sigma^2)}{\sum_{j \in [k]}p_{j,t}(\bx)\calN(y|g(\ba_j^\top \bx),\sigma^2)}, \quad p_{i,t}(\bx) = \frac{e^{(\bw_t)_i^\top \bx}}{1+\sum_{j \in [k-1]} e^{(\bw_t)_j^\top \bx}}.
\end{align*}
Then the next iterate of the gradient-EM algorithm is given by $\bW_{t+1}=\Pi_{\Omega}(\bW_t+\alpha \nabla_{\bW}Q(\bW|\bW_t)_{\bW=\bW_t} )$. We have that
\begin{align*}
\nabla_{\bw_i}Q(\bW|\bW_t)|_{\bW=\bW_t} = \Expect\qth{\pth{p_{\bW_t}^{(i)}-\frac{e^{(\bw_t)_i^\top \bx}}{1+\sum_{j \in [k-1]} e^{(\bw_t)_j^\top \bx}}}\bx}=-\nabla_{\bw_i}L_{\log}(\bW_t,\bA).
\end{align*}
Hence if we use the same step size $\alpha$, our population-gradient iterates on the log-likelihood are same as that of the gradient-EM iterates. This finishes the proof. 
\end{proof}

\subsection{Proof of \prettyref{lmm:robustness}}
\begin{proof}
Fix any $\bW \in \Omega$ and let $\bA =\begin{bmatrix}
\ba_1^\top \\ \ldots \\ \ba_k^\top
\end{bmatrix} \in \reals^{k \times d} $ be such that $\max_{i \in [k]}\norm{\ba_i-\ba_i^\ast}_2 =\sigma^2 \varepsilon_1 $ for some $\varepsilon_1>0$. Let 
\begin{align*}
\bW'=G(\bW,\bA), \quad (\bW')^\ast = G(\bW,\bA^\ast).
\end{align*}
Denoting the $i^{\mathrm{th}}$ row of $\bW' \in \reals^{(k-1)\times d}$ by $\bw'_i$ and that of $(\bW')^\ast$ by $(\bw'_i)^\ast$ for any $i \in [k-1]$, we have that
\begin{align*}
\norm{\bw'_i-(\bw'_i)^\ast}_2 &=\norm{\Pi_{\Omega}(\bw_i-\alpha \nabla_{\bw_i}L_{\log}(\bW,\bA))-\Pi_{\Omega}(\bw_i-\alpha \nabla_{\bw_i}L_{\log}(\bW,\bA^\ast))}_2 \\
& \leq \alpha \norm{\nabla_{\bw_i}L_{\log}(\bW,\bA)-\nabla_{\bw_i}L_{\log}(\bW,\bA^\ast) }_2.
\end{align*}
Thus it suffices to bound $\norm{\nabla_{\bw_i}L_{\log}(\bW,\bA)-\nabla_{\bw_i}L_{\log}(\bW,\bA^\ast) }_2$. From \prettyref{eq:gradcomputation}, we have that
\begin{align*}
\nabla_{\bw_i}L_{\log}(\bW,\bA) = -\Expect \qth{\pth{\frac{p_i(\bx)N_i}{\sum_{i \in [k]}p_i(\bx)N_i} - p_i(\bx)}\bx}, \\
 \nabla_{\bw_i}L_{\log}(\bW,\bA^\ast) = -\Expect \qth{\pth{\frac{p_i(\bx)N_i^\ast}{\sum_{i \in [k]}p_i(\bx)N_i^\ast} - p_i(\bx)}\bx},
\end{align*}
where,
\begin{align*}
p_i(\bx) = \frac{e^{\bw_i^\top \bx}}{1+\sum_{k \in [k-1]}e^{\bw_j^\top \bx}}, \quad N_i \define \calN(y|g(\ba_i^\top \bx),\sigma^2), \quad N_i^\ast= \calN(y|g((\ba_i^\ast)^\top \bx),\sigma^2).
\end{align*}
Thus we have
\begin{align}
\norm{\nabla_{\bw_i}L_{\log}(\bW,\bA)-\nabla_{\bw_i}L_{\log}(\bW,\bA^\ast) }_2 = \norm{\Expect[(p^{(i)}(\bA,\bW)-p^{(i)}(\bA^\ast,\bW)  )\bx]}_2,
\label{eq:refer_other}
\end{align}
where $p^{(i)}(\bA,\bW) \define \frac{p_i(\bx)N_i}{\sum_{i \in [k]}p_i(\bx)N_i}$ denotes the posterior probability of choosing the $i^{\mathrm{th}}$ expert. Now we observe that \prettyref{eq:refer_other} reduces to the setting of \cite[Lemma 4]{makkuva2018breaking} and hence the conclusion follows.

\end{proof}

\subsection{Proof of \prettyref{lmm:deviation}}
\begin{proof}
We first prove the lemma for $k=2$. For $2$-MoE, we have that the posterior probability is given by
\begin{align*}
p_{\bw}(\bx,y)=\frac{f(\bw^\top \bx)N_1}{f(\bw^\top \bx)N_1+(1-f(\bw^\top \bx))N_2},
\end{align*}
where $f(\cdot)=\frac{1}{1+e^{-(\cdot)}}, N_1=\calN(y|g(\ba_1^\top \bx),\sigma^2)$ and $N_2=\calN(y|g(\ba_2^\top \bx),\sigma^2)$ for fixed $\ba_1, \ba_2 \in \reals^d$. Then we have that
\begin{align*}
\nabla_{\bw} L_{\log}(\bw,\bA)=- \Expect[(p_{\bw}(\bx,y)-f(\bw^\top \bx))\cdot \bx].
\end{align*}
Hence
\begin{align*}
G(\bw,\bA)= \Pi_{\Omega}(\bw+\alpha \Expect[(p_{\bw}(\bx,y)-f(\bw^\top \bx))\cdot \bx] ), \quad G_n(\bw,\bA)=\Pi_{\Omega}(\bw+\frac{\alpha}{n} \sum_{i\in [n]}(p_{\bw}(\bx_i,y_i)-f(\bw^\top \bx_i))\cdot \bx_i) .
\end{align*}
Since $0<\alpha<1$, we have that
\begin{align*}
\norm{G(\bw,\bA)-G_n(\bw,\bA)}_2 &\leq \|\Expect[(p_{\bw}(\bx,y)-f(\bw^\top \bx)) \bx]-\frac{1}{n} \sum_{i\in [n]}(p_{\bw}(\bx_i,y_i)-f(\bw^\top \bx_i)) \bx_i \|_2 \\
& \leq  \underbrace{\| \Expect[p_{\bw}(\bx,y) \bx]-\sum_{i \in [n]}\frac{p_{\bw}(\bx_i,y_i) \bx_i}{n}
\|_2}_{T_1}+ \underbrace{\| \Expect[f(\bw^\top \bx) \bx]-\sum_{i \in [n]}\frac{f(\bw^\top \bx_i) \bx_i}{n}\|_2}_{T_2}.
\end{align*}
We now bound $T_1$ and $T_2$.

\textbf{Bounding $T_2$:} We prove that the random variable $\sum_{i \in [n]}\frac{f(\bw^\top \bx_i) \bx_i}{n}-\Expect[f(\bw^\top \bx) \bx]$ is sub-gaussian with parameter $L/\sqrt{n}$ for some constant $L >1$ and thus its squared norm is sub-exponential. We then bound $T_2$ using standard sub-exponential concentration bounds. Towards the same, we first show that the random variable $f(\bw^\top \bx) \bx-\Expect[f(\bw^\top \bx)\bx]$ is sub-gaussian with parameter $L$. Or equivalently, that $f(\bw^\top \bx) \inner{\bx}{\bu}-\Expect[f(\bw^\top \bx)\inner{\bx}{\bu}]$ is sub-gaussian for all $\bu \in \mathbb{S}^d$.
%\begin{align}
%\Expect[e^{\lambda \inner{f(\bw^\top \bx) \bx-\Expect[f(\bw^\top \bx)\bx]}{\bu}}] \leq e^{\frac{\lambda^2 L^2}{2}}, \quad \forall \bu \in \mathbb{S}^d.
%\label{eq:sub-gauss}
%\end{align}
%\prettyref{eq:sub-gauss} is trivial for $\bw=0$.

Without loss of generality, assume that $\bw \neq 0$. First let $u=\vec{\bw} \define \frac{\bw}{\norm{\bw}}$. Thus $Z \triangleq \inner{\vec{\bw}}{\bx} \sim \calN(0,1)$. We have
\begin{align*}
g(Z) \define f(\bw^\top \bx) \inner{\bx}{\vec{\bw}}-\Expect[f(\bw^\top \bx)\inner{\bx}{\vec{\bw}}]=f(\norm{\bw}Z)Z-\Expect[f(\norm{\bw}Z)Z].
\end{align*}
It follows that $g(\cdot)$ is Lipschitz since 
\begin{align*}
|g'(z)|=|f'(\norm{\bw}z)\norm{\bw}z+f(\norm{\bw}z))| &\leq \sup_{t \in \reals}|f'(t)t|+1  = \sup_{t>0} \frac{te^t}{(1+e^t)^2}+1 \define L.
\end{align*}
From the Talagaran concentration of Gaussian measure for Lipschitz functions \citep{LedouxTal91}, it follows that $g(Z)$ is sub-gaussian with parameter $L$. Now consider any $\bu \in \mathbb{S}^d$ such that $\bu \perp \bw$. Then we have that $Y \define \inner{\bu}{\bx} \sim \calN(0,1)$ and $Z \define \inner{\vec{\bw}}{\bx} \sim \calN(0,1)$ are independent. Thus,
\begin{align*}
g(Y,Z) \define f(\bw^\top \bx) \inner{\bu}{\bx}-\Expect[f(\bw^\top \bx)\inner{\bu}{\bx}]=f(\norm{\bw}Z)Y-\Expect[f(\norm{\bw}Z)Y]
\end{align*}
is sub-gaussian with parameter $1$ since $f \in [0,1]$ and $Y,Z$ are independent standard gaussians. Since any $\bu \in \mathbb{S}^d$ can be written as
\begin{align*}
u = P_{\bw}(u)+P_{\bw^\perp}(u),
\end{align*}
where $P_S$ denotes the projection operator onto the sub-space $S$, we have that  $f(\bw^\top \bx) \inner{\bx}{\bu}-\Expect[f(\bw^\top \bx)\inner{\bx}{\bu}]$ is sub-gaussian with parameter $L$ for all $\bu \in \mathbb{S}^d$. Thus it follows that $\sum_{i \in [n]}\frac{f(\bw^\top \bx_i) \bx_i}{n}-\Expect[f(\bw^\top \bx) \bx]$ is zero-mean and sub-gaussian with parameter $L/\sqrt{n}$ which further implies that
\begin{align*}
T_2 \leq c_2  L \sqrt{\frac{d \log(1/\delta)}{n}},
\end{align*}
with probability at least $1-\delta/2$.

\textbf{Bounding $T_1$:} Let $Z \define \| \sum_{i \in [n]}\frac{p_{\bw}(\bx_i,y_i) \bx_i}{n}-\Expect[p_{\bw}(\bx,y) \bx]
\|_2 = \sup_{\bu \in \mathbb{S}^d}Z(\bu)$, where 
\begin{align*}
Z(\bu) \define \sum_{i \in [n]}\frac{p_{\bw}(\bx_i,y_i) \inner{\bx_i}{\bu}}{n} -\Expect[p_{\bw}(\bx,y) \inner{\bx}{\bu}].
\end{align*}
Let $\{\bu_1,\ldots,\bu_M\}$ be a $1/2$-cover of the unit sphere $\mathbb{S}^d$. Hence for any $\bv \in \mathbb{S}^d$, there exists a $j \in [M]$ such that $\norm{\bv-\bu_j}_2 \leq 1/2$. Thus,
\begin{align*}
Z({\bv}) \leq Z(\bu_j)+|Z(\bv)-Z(\bu_j)| \leq Z \norm{\bv-\bu_j}_2 \leq Z(\bu_j)+Z/2,
\end{align*}
where we used the fact that $|Z(\bu)-Z(\bv)|\leq Z\norm{\bu-\bv}_2$ for any $\bu,\bv \in \mathbb{S}^d$. Now taking supremum over all $\bv \in \mathbb{S}^d$ yields that $Z \leq 2 \max_{j \in [M]}Z(\bu_j)$. Now we bound $Z(\bu)$ for a fixed $\bu \in \mathbb{S}^d$. By symmetrization trick \citep{vaart1996weak}, we have
\begin{align*}
\prob{Z(\bu) \geq t} \leq 2\prob{\frac{1}{n}\sum_{i=1}^n \varepsilon_i p_{\bw}(\bx_i,y_i)\inner{\bx_i}{\bu} \geq t/2},
\end{align*}
where $\varepsilon_1,\ldots,\varepsilon_n$ are \iid Rademacher variables. Define the event $E\define \{\frac{1}{n}\sum_{i \in [n]}\inner{\bx_i}{\bu}^2 \leq 2\}$. Since $\inner{\bx_i}{\bu} \sim \calN(0,1)$, standard tail bounds imply that $\prob{E^c} \leq e^{-n/32}$. Thus we have that
\begin{align*}
\prob{Z(\bu) \geq t} \leq 2\prob{\frac{1}{n}\sum_{i=1}^n \varepsilon_i p_{\bw}(\bx_i,y_i)\inner{\bx_i}{\bu} \geq t/2|E}+2e^{-n/32}.
\end{align*}
Considering the first term, for any $\lambda>0$, we have
\begin{align*}
\Expect[ \exp \pth{\frac{\lambda}{n}\sum_{i=1}^n \varepsilon_i p_{\bw}(\bx_i,y_i)\inner{\bx_i}{\bu}} | E] \leq \Expect[ \exp \pth{\frac{2\lambda}{n}\sum_{i=1}^n \varepsilon_i \inner{\bx_i}{\bu}} | E],
\end{align*}
where we used the Ledoux-Talagrand contraction for Rademacher process \citep{LedouxTal91}, since $|p_{\bw}(\bx_i,y_i)| \leq 1$ for all $(\bx_i,y_i)$. The sub-gaussianity of Rademacher sequence $\{\varepsilon_i\}$ implies that
\begin{align*}
\Expect[ \exp \pth{\frac{2\lambda}{n}\sum_{i=1}^n \varepsilon_i \inner{\bx_i}{\bu}} | E] \leq \Expect[\exp \pth{\frac{2\lambda^2}{n^2}\sum_{i=1}^n \inner{\bx_i}{\bu}^2 } | E ] \leq \exp(\frac{4\lambda^2}{n}),
\end{align*}
using the definition of the event $E$. Thus the above bound on the moment generating function implies the following tail bound:
\begin{align*}
\prob{\frac{1}{n}\sum_{i=1}^n \varepsilon_i p_{\bw}(\bx_i,y_i)\inner{\bx_i}{\bu} \geq t/2|E} \leq \exp\pth{-\frac{nt^2}{256}}.
\end{align*}
Combining all the bounds together, we obtain that
\begin{align*}
\prob{Z(\bu)\geq t} \leq 2 e^{-nt^2/256}+2e^{-n/32}.
\end{align*}
Since $M \leq 2^d$, using the union bound we obtain that
\begin{align*}
\prob{Z \geq t} \leq 2^d(2 e^{-nt^2/1024}+2e^{-n/32}).
\end{align*}
Since $n \geq c_1 d \log (1/\delta)$, we have that $T_1=Z \leq c \sqrt{\frac{d \log(1/\delta)}{n}}$ with probability at least $1-\delta/2$. Combining these bounds on $T_1$ and $T_2$ yields the final bound on $\varepsilon_G(n,\delta)$.

Now consider any $k \geq 2$. From \prettyref{eq:gradcomputation}, defining $N_i \define \calN(y|g(\ba_i^\top \bx),\sigma^2)$ and $p_i(\bx)=\frac{e^{\bw_i^\top \bx}}{1+\sum_{j \in [k-1]}e^{\bw_j^\top \bx} }$, we have that
\begin{align*}
\nabla_{\bw_i}L_{\log}(\bW,\bA) = -\Expect \pth{\frac{p_i(\bx)N_i}{\sum_{i \in [k]}p_i(\bx)N_i} - p_i(\bx)}\bx.
\end{align*}
Similarly,
\begin{align*}
\nabla_{\bw_i}L^{(n)}_{\log}(\bW,\bA) = -\sum_{j=1}^n \frac{1}{n} \pth{\frac{p_i(\bx_j)N_i}{\sum_{i \in [k]}p_i(\bx_j)N_i} - p_i(\bx_j)}\bx_j.
\end{align*}
Since $\norm{G_n(\bW,\bA)-G(\bW,\bA)}=\max_{i \in [k-1]}\norm{G_n(\bW,\bA)_i-G(\bW,\bA)_i}_2$, with out loss of generality, we let $i=1$. The proof for the other cases is similar. Thus we have
\begin{align*}
\norm{G_n(\bW,\bA)_1-G(\bW,\bA)_1}_2 &\leq \norm{\nabla_{\bw_1}L_{\log}(\bW,\bA) -\nabla_{\bw_1}L^{(n)}_{\log}(\bW,\bA) }_2 \\
&\leq \norm{\sum_{i=1}^n \frac{p^{(1)}(\bx_i,y_i)\bx_i}{n}-\Expect[p^{(1)}(\bx,y)\bx]}_2+ \norm{\sum_{i=1}^n \frac{p_1(\bx)\bx}{n}-\Expect[p_1(\bx)\bx]}_2,
\end{align*}
where $p^{(1)}(\bx,y) \define \frac{p_1(\bx)N_1}{\sum_{i \in [k]}p_i(\bx)N_i}$. Since $|p^{(1}(\bx,y)| \leq 1$ and $|p_1(\bx)| \leq 1$, we can use the same argument as in the bounding of $T_1$ proof for $2$-MoE above to get the parametric bound. This finishes the proof. 
\end{proof}

\section{Additional experiments}
\label{app:additionalexp}

\subsection{Reduced batch size}

\begin{figure*}[t]
    \centering
    \begin{subfigure}[b]{0.45\textwidth}
        \includegraphics[width=\textwidth]{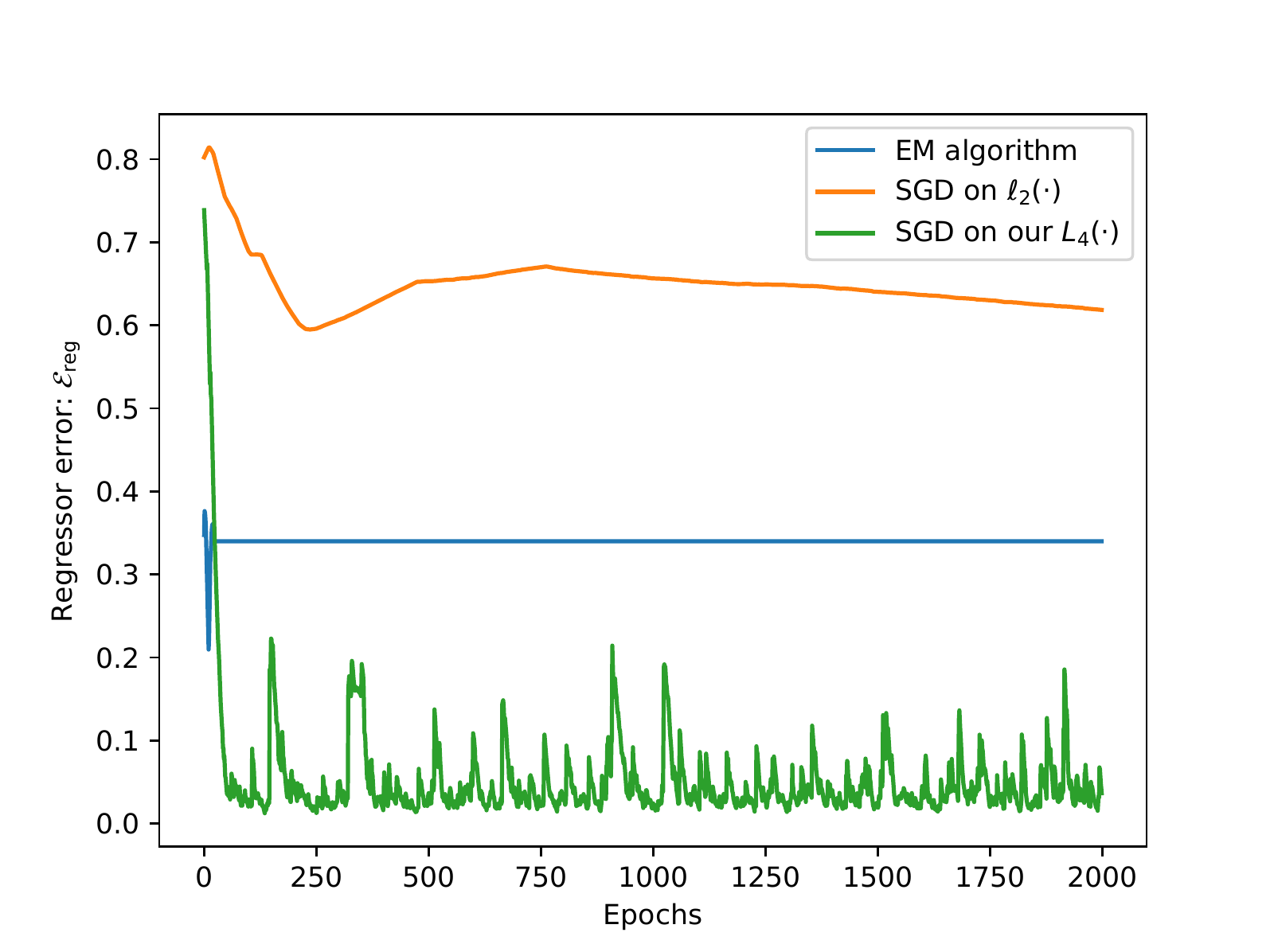}
        \caption{Regressor error}
        \label{fig:all_regressor_error_appendix}
    \end{subfigure}
%    ~ %add desired spacing between images, e. g. ~, \quad, \qquad, \hfill etc. 
%      %(or a blank line to force the subfigure onto a new line)
    \begin{subfigure}[b]{0.45\textwidth}
        \includegraphics[width=\textwidth]{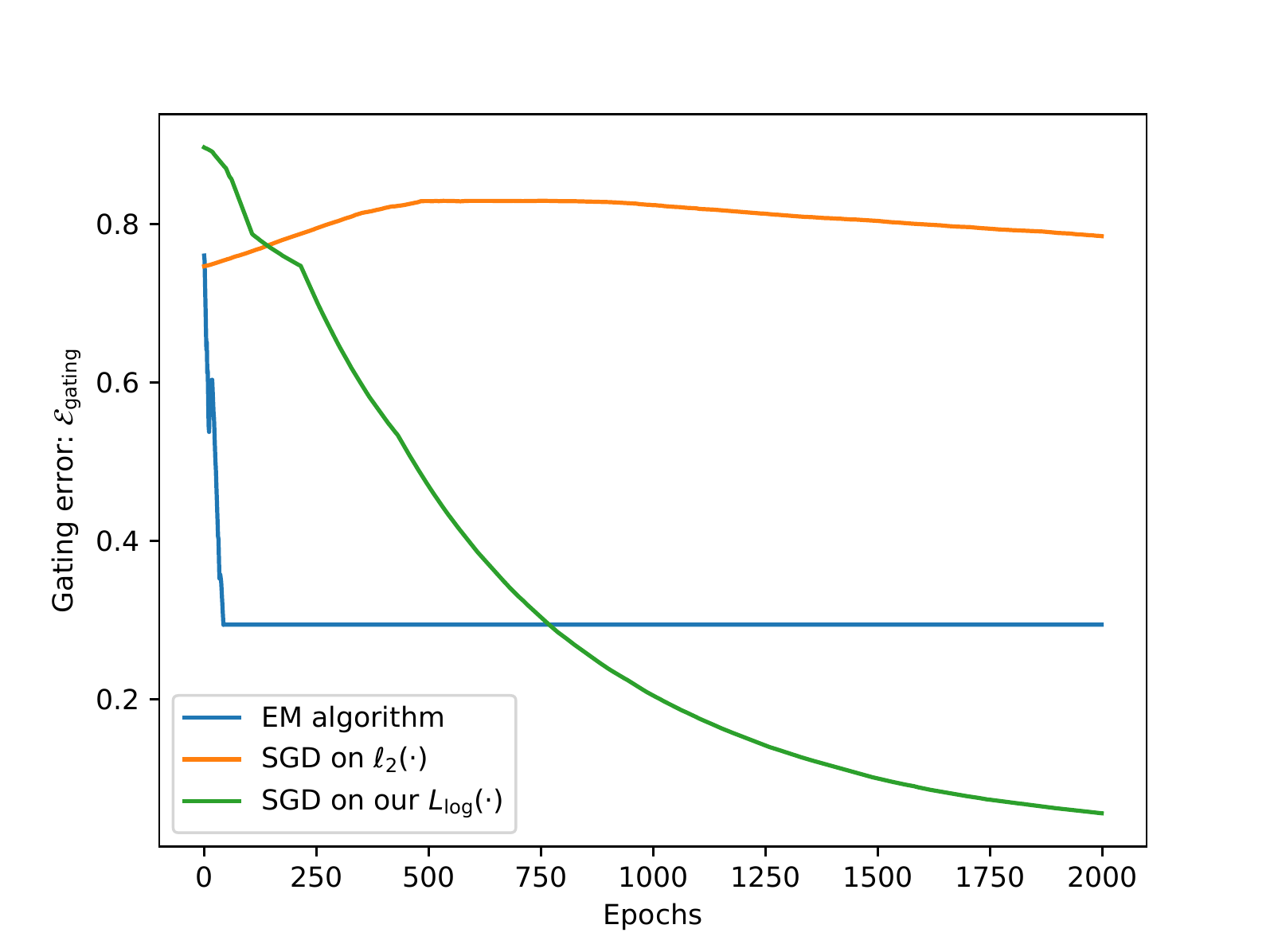}
        \caption{Gating error}
        \label{fig:all_classifier_error_appendix}
    \end{subfigure}
    ~ %add desired spacing between images, e. g. ~, \quad, \qquad, \hfill etc. 
    %(or a blank line to force the subfigure onto a new line)
%    \begin{subfigure}[b]{0.32\textwidth}
%        \includegraphics[width=\textwidth]{5runs_ourloss_k_3_1024.pdf}
%        \caption{$L_4(\cdot)$ over different initializations}
%        \label{fig:our_loss_multiple_runs}
%    \end{subfigure}

%\includegraphics[width=0.5\textwidth]{regressor_d_10_k_2_batch_1024.png}
%\hfill
%\includegraphics[width=0.5\textwidth]{classifier_d_10_k_2_batch_1024.png}
\caption{Comparison of SGD on our losses $(L_4, L_{\mathrm{log}})$ vs. $\ell_2$ and the EM algorithm. }
\label{fig:regresslearnappendix}
\end{figure*}

In \prettyref{fig:regresslearnappendix} we ran SGD on our loss $L_4(\cdot)$ with $5$ different runs with a batch size of $128$ and a learning rate of $0.001$ for $d=10$ and $k=3$. We can see that our algorithm still converges to zero but with a more variance because of noisy gradient estimation and also lesser number of samples than the required sample complexity.

%\subsection{Expert and gating parameters being non-orthogonal}
%\label{app:non_orthogonal}
%
%For this experiment,  we chose $d=10$, $k=2$, $\ba_i^\ast=\be_i$ for $i \in [k]$ and $g=$linear. The gating parameters $\bw_i^\ast $ for $i \in [k-1]$ are randomly chosen on the unit sphere. We generated the data $\{(\bx_i,y_i)_{i=1}^n \}$ according to \prettyref{eq:kmoe} and using these parameters. We chose $\sigma=0.05$ for all of our experiments. We used the batch size of $1024$ and a learning rate of $0.001$ for SGD on $L_4(\cdot)$ as well as for $L_{\mathrm{\log}}$.
%
%In \prettyref{fig:non_orthogonal}, we plotted the individual parameter estimation error for both the expert and the gating parameters for $5$ different runs. We can see that our algorithms are still able to learn the true parameters even when the orthogonality assumption is relaxed.
%\begin{figure*}
%\includegraphics[width=\textwidth]{non_orth_regress.png}
%\includegraphics[width=\textwidth]{non_orth_class.png}
%
%\caption{Top: Parameter error for regressors using $L_4(\cdot)$. Bottom: Parameter error for gating parameters using $L_{\mathrm{\log}}$}
%\label{fig:non_orthogonal}
%\end{figure*}

\end{document}